\documentclass{article}
\usepackage{chapterbib}
\usepackage[round,sectionbib]{natbib}
\usepackage{amsmath}
\usepackage{amssymb}
\usepackage{amsthm}
\usepackage{authblk}
\usepackage{booktabs}
\usepackage{dutchcal}
\usepackage[margin=1in]{geometry}
\usepackage{hyperref}
\usepackage[capitalize]{cleveref}
\usepackage{mathrsfs}
\usepackage{mathtools}
\usepackage{subcaption}
\usepackage{tikz}
\usetikzlibrary{3d,arrows,calc,patterns,positioning,snakes}
\usepackage{xcolor}
\setcounter{secnumdepth}{3}

\DeclareMathOperator*{\argmin}{argmin}

\newcommand{\E}[2]{\mathop{\mathbb{E}}_{#1} \left[ #2 \right]}
\newcommand{\ent}[1]{\mathrm{H} \left( #1 \right)}
\newcommand{\kld}[2]{\mathrm{D_{KL}} \left( #1 \left| \left| #2 \right. \right. \right)}

\newtheorem{assumption}{Assumption}
\crefname{assumption}{Assumption}{Assumptions}
\newtheorem{corollary}{Corollary}
\newtheorem{definition}{Definition}
\newtheorem{lemma}{Lemma}
\newtheorem{theorem}{Theorem}
\newtheorem{remark}{Remark}
\newtheorem{proposition}{Proposition}

\newcommand{\datasample}{x}
\newcommand{\datasamples}{X}
\newcommand{\dataval}{\mathbf{x}}
\newcommand{\datavals}{\mathbf{X}}
\newcommand{\dist}{P}
\newcommand{\distvals}{\Delta_{\datasamples}}
\newcommand{\bary}[1]{\overline{#1}}
\newcommand{\bestfitdist}[1]{P_{\star}}
\newcommand{\bias}{\mathcal{B}}
\newcommand{\convergence}{\mathcal{C}}
\newcommand{\distshift}{\mathcal{D}}
\newcommand{\distshiftLearner}{\mathcal{D}_{\mathrm{learner}}}
\newcommand{\epistemicerr}{\mathbf{e}}
\newcommand{\eps}{\epsilon}
\newcommand{\hyperdist}{\mathcal{Q}}
\newcommand{\hyperdistpdf}{\mathcal{q}}
\newcommand{\hyperdistvals}{\Delta\left( \Delta_{\datasamples} \right)}
\newcommand{\hypersourcedist}{\hyperdist^{S}}
\newcommand{\hypertargetdist}{\hyperdist^{T}}
\newcommand{\hypersourcepdf}{\hyperdistpdf^{S}}
\newcommand{\hypertargetpdf}{\hyperdistpdf^{T}}
\newcommand{\loss}{\ell}
\newcommand{\margin}{\alpha}
\newcommand{\marginprob}{\delta}
\newcommand{\maxp}{\overline{u}}
\newcommand{\minp}{\underline{u}}
\newcommand{\modelclass}{\pi}
\newcommand{\preddist}{\widehat{P}}
\newcommand{\preddistpdf}{\widehat{p}}
\newcommand{\repl}[1]{#1^{\prime}}
\newcommand{\Rplus}{\mathbb{R}^+}
\newcommand{\sourcedata}{\dataval_{(1:n)}}
\newcommand{\task}{Q}
\newcommand{\sourcetask}{\task^{s}}
\newcommand{\targettask}{\task^{t}}
\newcommand{\taskdist}{Q}
\newcommand{\sourcetaskdist}{\taskdist^s}
\newcommand{\support}[1]{\mathrm{supp}\left( #1 \right)}
\newcommand{\targettaskdist}{\taskdist^t}
\newcommand{\targettaskpdf}{q^{t}}
\newcommand{\thetaval}{\vartheta}
\newcommand{\thetavals}{\Theta}

\newcommand{\trueprob}[1]{\mathrm{Pr}\left( #1 \right)}
\newcommand{\tv}[2]{\mathrm{d_{TV}}\left( #1, #2 \right)}
\newcommand{\tvset}{\tau_{\hyperdist}}
\newcommand{\var}[1]{\mathbb{V}_{\dataval} \left[ #1 \right]}
\newcommand{\maxvar}[1]{\mathcal{V}\left[ #1 \right]}
\newcommand{\wpnorm}[1]{\sqrt{\var{#1}}}

\title{Epistemic Errors of Imperfect Multitask Learners When Distributions Shift}
\author[1]{Sabina J. Sloman\footnote{Correspondence to \texttt{sabina.sloman@manchester.ac.uk}.}}
\author[1]{Michele~Caprio}
\author[1,2,3]{Samuel Kaski}
\affil[1]{Department of Computer Science, University of Manchester, Manchester, UK}
\affil[2]{ELLIS Institute Finland, Helsinki, Finland}
\affil[3]{Department of Computer Science, Aalto University, Helsinki, Finland}

\begin{document}

\maketitle

\begin{abstract}
    Uncertainty-aware machine learners, such as Bayesian neural networks, output a quantification of uncertainty instead of a point prediction.
    We provide uncertainty-aware learners with a principled framework to characterize, and identify ways to eliminate, errors that arise from reducible (\textit{epistemic}) uncertainty.
    We introduce a principled definition of \textit{epistemic error}, and provide a decompositional \textit{epistemic error bound} which operates in the very general setting of imperfect multitask learning under distribution shift.
    In this setting, the training (\textit{source}) data may arise from multiple tasks, the test (\textit{target}) data may differ systematically from the source data tasks, and/or the learner may not arrive at an accurate characterization of the source data.
    Our bound separately attributes epistemic errors to each of multiple aspects of the learning procedure and environment.
    As corollaries of the general result, we provide epistemic error bounds specialized to the settings of Bayesian transfer learning and distribution shift within $\epsilon$-neighborhoods.
\end{abstract}

\section{INTRODUCTION}\label{sec:intro}
    Probabilistic and uncertainty-aware machine learning paradigms have gained increasing traction in recent years \citep{izbicki_machine_2025}.
    Methods like Bayesian inference \citep{papamarkou_position_2024}, conformal prediction \citep{angelopoulos_conformal_2023}, and credal learning \citep{caprio_cbdl_2024} are used in the context of models of increasing scale.
    Instead of a point prediction, these methods provide a quantification of their uncertainty about an outcome, in the form of a predictive distribution, prediction interval, or set.

    Statistical learning theory provides machine learners with a framework to characterize prediction errors.
    For uncertainty-aware learners, prediction errors are an irrelevant, incomplete, or uninformative measure of performance: They do not capture the learner's additional goal of uncertainty quantification, and conflate error that arises from reducible and irreducible sources of uncertainty.
    In this work, we develop a framework and collection of results that bring the conceptual tools of statistical learning theory to uncertainty-aware paradigms.
    Our goal is to provide uncertainty-aware learners with a principled framework to characterize, and identify ways to improve, their performance.

    Learners confront two types of uncertainty \citep{hullermeier_aleatoric_2021}.
    Irreducible, \textit{aleatoric uncertainty} refers to the amount of noise inherent to the data-generating process (DGP).
    On the other hand, the learner can reduce its \textit{epistemic uncertainty} by taking actions to acquire knowledge of the DGP.

    We say a learner commits \textit{epistemic errors} when it is mistaken about the nature of the DGP.
    An uncertainty-aware learner who eliminates epistemic errors is both accurate and calibrated: It is accurate in the sense that it captures the DGP, and is calibrated in the sense that its uncertainty reflects the variability in outcomes it encounters at test-time.

    Our main contribution is the concept of an \textit{epistemic error bound}: a guarantee of how well a learner captures the DGP.
    Our second, technical contribution is a general, ``decompositional'' epistemic error bound (\Cref{thm:imperfect-distshift}).
    Our bound is general in the sense that it accommodates any of several sources of epistemic uncertainty: Training data may be a combination of data from multiple tasks (\textit{multitask learning}),
    the task in which the learner wants to predict may differ systematically from the training data (\textit{distribution shift}), or
    the learner may \textit{imperfectly} characterize the training data.
    Our bound is decompositional in the sense that it decomposes into three separate contributors to epistemic error: model restrictions ($\bias$), data scarcity ($\convergence$), and distribution shift ($\distshift$).
    
    To summarize, our bound has the following structure:
    $$\trueprob{ \epistemicerr \geq \alpha + \bias + \convergence + \distshift } \leq \delta$$
    where $\epistemicerr$ is the epistemic error and $\alpha \in \mathbb{R}^{+}$ is selected to determine the value of the bound (or \textit{epistemic error margin}; see \Cref{sec:neighbors}).
    Our technical innovations are to provide (i) an expression for $\marginprob$, and (ii) suitable definitions for $\bias$, $\convergence$, and $\distshift$.

    Taken together, these contributions provide uncertainty-aware learners with a principled framework within which to identify which errors can be reduced, and how to reduce them.

    \vspace{2mm}
    \noindent \textit{Example.} 
    To provide uncertainty quantification, methods like Bayesian neural networks have been applied in healthcare settings \citep{peng_bayesian_2019,abdar_uncertainty_2021,abdullah_review_2022}.
    When using such a model to predict healthcare outcomes in a specified target location, one might consider the following questions:
    (Q1) Should I train my model on abundant data from other, somewhat dissimilar, locations, or only on scarce data that arise from the target location?
    (Q2) Should I use a deep neural network that has the potential to accurately characterize the training data, but may require more data to train than a shallower network?

    Each of these questions expresses a trade-off between reducing the sources of epistemic error that are represented in our bound.
    (Q1) captures the trade-off between data scarcity ($\convergence$) and distribution shift ($\distshift$): Abundant training data contribute to convergence on a good approximation of those data, while data from locations that are dissimilar to the target location contribute to distribution shift.
    (Q2) captures the trade-off between model restrictions ($\bias$) and data scarcity ($\convergence$): Highly parameterized models reduce approximation bias, but at the cost of requiring potentially infeasibly large amounts of training data.
    Our work provides a framework within which the modeler can reason about the best setting for their clinical prediction task.

    \paragraph{Structure of the paper}
        In the remainder of this section, we provide a brief overview of statistical learning theory.\footnote{
            \Cref{ap:ast} compares our framework to the paradigm of asymptotic statistical theory.
        }
        \Cref{sec:formulation} introduces a very general multitask learning setting that allows for imperfect learning and distribution shift.
        \Cref{sec:neighbors} provides our main conceptual contribution: the concept of an epistemic error bound.
        \Cref{sec:result} provides our main result: a decompositional epistemic error bound for the general setting introduced in \Cref{sec:formulation}.
        \Cref{sec:cases} tailors the general bound to each of the settings of Bayesian transfer learning and distribution shift within total variation neighborhoods.
        \Cref{sec:discussion} discusses limitations of our work and directions for future research.

\paragraph{Statistical learning theory}\label{sec:related-work}
    (SLT) is concerned with the development of \textit{generalization bounds} on the expected loss that a learner will incur when encountering a given DGP.
    Probably approximately correct (PAC) learning theory, a paradigm of SLT, provides probabilistic generalization bounds (so-called PAC bounds) of the form
    $$\trueprob{L \geq \widehat{L} + \epsilon} \leq \delta$$
    where $L$ is the expected loss with respect to a distribution of data, $\widehat{L}$ is an empirical estimate of $L$, $\epsilon$ is a small positive constant, and $\delta \in (0,1)$ \citep{shalev-shwarz_understanding_2014}.
    Our concept of an epistemic error bound is inspired by such PAC bounds.
    Like a PAC bound, an epistemic error bound is a probabilistic upper bound.
    An epistemic error bound differs from a PAC bound in two key ways: While the loss that appears in a generalization bound may arise from epistemic errors or aleatoric factors, an epistemic error bound isolates the amount of reducible error.
    In addition, our main bound in \Cref{thm:imperfect-distshift} applies in a setting more general than those studied in other statistical learning theoretic paradigms.\footnote{\Cref{sec:distance} provides, as corollaries to our result, generalization bounds for common loss functions that apply in this general setting.}
    In \Cref{ap:related-work}, we discuss and contrast with our work the following learning theoretic paradigms:
    \textbf{Multitask domain generalization}
        provides generalization bounds for the multitask setting \citep{baxter_model_2000,maurer_bounds_2006,maurer_sparse_2013,maurer_excess_2013,liu_algorithm_2017,deshmukh_generalization_2019}.
    \textbf{Domain adaptation theory}
        provides generalization bounds that apply in certain cases of distribution shift \citep{redko_survey_2019}.
    \textbf{Credal learning theory}
        extends SLT with a model of uncertainty about the DGP that uses imprecise probabilities \citep{caprio_credal_2024}.

\section{PROBLEM FORMULATION}\label{sec:formulation}
    Each data point is an element $\datasample$ of space $\datasamples$.
    Measurable events are denoted by $\dataval \in \datavals$, where $\datavals$ is a $\sigma$-algebra on $\datasamples$.
    We denote by $\distvals$ the space of probabilities on $\datasamples$, and by $\hyperdistvals$ the space of second-order distributions on $\datasamples$, that is, the space of distributions over $\distvals$.
    
    Our formulation characterizes both the learner's beliefs and the learning environment.
    We thus require notation to distinguish between, on one hand, the  data-generating processes (DGPs) --- the distributions that generate the outcomes the learner will in fact encounter --- and, on the other, the learner's (possibly misspecified) beliefs about these distributions.
    To refer to the DGPs, we use $Q$ and $q$ to denote first-order probability distributions and their probability density functions (p.d.f.'s), respectively\footnote{In case $\vert X \vert \in \mathbb{N}$, i.e., of a finite sample space, $q$ is the probability mass function (p.m.f.) of $Q$.
        Statements that reference a p.d.f. $q$ can be trivially generalized to the case of a finite sample space.
    }, and $\hyperdist$ and $\hyperdistpdf$ for second-order probability distributions and their densities, respectively\footnote{In case $\hyperdist$ has finite support, $\hyperdistpdf$ is the p.m.f. of $\hyperdist$.}.
    To refer to the learner's beliefs, we use $P$ and $p$ to denote first-order probability distributions and their densities\footnote{In case of a finite sample space, $p$ is the p.m.f. of $P$.}, respectively.
    We use $\trueprob{\mathbf{a}}$ to refer to the true probability (determined by the DGPs and unavailable to the learner) that event $\mathbf{a}$ occurs.
    We use $\mathrm{supp}(Q)$ to denote the support of distribution $Q$.
        
    In a classification problem where the sample space $\datasamples$ is finite and $\datavals = \mathcal{P}(\datasamples)$, the power set of $X$, a set $\dataval \in \datavals$ can be interpreted as a plausible data set and $\trueprob{ \dataval }$ as the probability of encountering an element of that data set.\footnote{In this case, $\dataval$ can be considered a multiset and $\datavals$ the space of multisets over $\datasamples$.}
    In a regression problem where the sample space $\datasamples = \mathbb{R}$ and $\datavals$ is the Borel $\sigma$-algebra on $\mathbb{R}$, a set $\dataval \in \datavals$ can be interpreted as an interval of possible values a data point can take and $\trueprob{ \dataval }$ as the probability a data point falls within that interval.

    \paragraph{Multitask learning}
        In the multitask learning setting, the learner encounters data from multiple \textbf{tasks}, or distinct DGPs.
        A \textbf{task} $\task$ is a first-order distribution from which data points are sampled.
        We assume tasks themselves are samples from a second-order \textbf{task distribution} $\hyperdist \in \hyperdistvals$.

        A task distribution $\hyperdist$ induces a first-order distribution over the data themselves.
        We characterize the expectation and variability of this first-order distribution using the following definitions of the \textit{barycenter} and \textit{variability} of $\hyperdist$.
        \begin{definition}[Barycenter]\label{def:bary}
            The barycenter $\bary{\hyperdist} \in \distvals$ of a task distribution $\hyperdist \in \hyperdistvals$ having p.d.f. $\hyperdistpdf$ assigns to $\dataval \in \datavals$ a probability
            $$\bary{\hyperdist}(\dataval) \coloneqq \int_{\distvals} \taskdist(\dataval) ~ \hyperdistpdf(\taskdist) ~ \text{\em d}\taskdist = \E{\task \sim \hyperdist}{\taskdist(\dataval)}.$$
        \end{definition}

        \begin{definition}[Variability]\label{def:var}
            The variance at $\dataval\in \datavals$ of a task distribution $\hyperdist \in \hyperdistvals$ having p.d.f. $\hyperdistpdf$ is
            $$\var{\hyperdist} \coloneqq \int_{\distvals} \left[ \taskdist(\dataval) - \overline{\hyperdist}(\dataval) \right]^2 ~ \hyperdistpdf(\taskdist) ~ \text{\em d}\taskdist.$$
            The variability of $\hyperdist$ is the maximum variance with respect to $\dataval \in \datavals$, i.e., is
            $$\maxvar{\hyperdist} \coloneqq \sup_{\dataval \in \datavals}\var{\hyperdist}.$$
        \end{definition}

        We use \textit{multitask distribution} to refer to a task distribution under which tasks disagree about the probability of encountering any $\dataval \in \datavals$:
        \begin{definition}[Multitask distribution]\label{def:multitask}
            We say a task distribution $\hyperdist$ is a multitask distribution if $\forall \dataval \in \datavals$, $\var{\hyperdist} > 0$.
        \end{definition}
        \begin{remark}\label{rem:var}
            Notice that the variance and variability of a multitask distribution are bounded from both below and above ($0 < \var{\hyperdist} \leq \maxvar{\hyperdist} \leq 1$) by the fact that $0 \leq \taskdist(\dataval) \leq 1$ for any $\dataval \in \datavals$ and $\taskdist \in \distvals$.
        \end{remark}

        During \textit{training}, the learner encounters data in each of multiple \textbf{source tasks}.
        We assume the source tasks are sampled independently and identically from a \textbf{source task distribution} $\hypersourcedist$ having p.d.f. $\hypersourcepdf$, and so the probability that the learner encounters a $\datasample \in \dataval$ among the source data is
        \begin{align*}
            \trueprob{ \dataval } = \int_{\distvals} \taskdist(\dataval) ~ \hypersourcepdf \left( \taskdist \right) ~ \text{d}\taskdist.
        \end{align*}

        Importantly, notice that $\trueprob{ \dataval } = \bary{\hypersourcedist}\left( \dataval \right)$ (\Cref{def:bary}). In other words, $\bary{\hypersourcedist}$ can be thought of as the true source data distribution.

        During \textit{testing}, the learner encounters a target task $\targettaskdist$.
        The learner's goal is to predict the outcomes that will accrue to $\targettask$.
        We assume the target task is sampled from a \textbf{target multitask distribution} $\hypertargetdist$.

        We say \textbf{distribution shift} has occurred when $\hypersourcedist \neq \hypertargetdist$, i.e., the source tasks are sampled from a different distribution than the target task.

        The learner begins with a set of possible models $\modelclass$, and selects from this set a predictive distribution $\preddist \in \modelclass$ with a corresponding p.d.f. $\preddistpdf$.
        Epistemic (test-time) error can be thought of as the degree to which $\preddist$ differs from $\targettaskdist$, a notion we make more precise in \Cref{sec:neighbors}.
        We say learning is \textbf{imperfect} when $\preddist \neq \bary{\hypersourcedist}$, i.e., the learner's predictions differ from the source data distribution.

    \paragraph{Uncertainty-aware paradigms}
        We briefly discuss whether and how our setting characterizes the three uncertainty-aware paradigms mentioned in \Cref{sec:intro}.

        \textit{Bayesian learners} (such as Bayesian neural networks) derive a posterior parameter distribution and output a posterior predictive distribution $\preddist$ over the space $\datasamples$.
        We discuss this paradigm in more detail in \Cref{sec:cases}.

        \textit{Conformal predictors} output an interval $C(\tilde{z})$, a set predicted to contain the true label $\tilde{y}$ of a target instance $\tilde{z}$ with high probability \citep{angelopoulos_conformal_2023}.
        Here, $\preddist$ is a discrete distribution over a sample space consisting of two elements: the outcome where the true label $\tilde{y}$ is contained in $C(\tilde{z})$, and the outcome where it is not.
        This case violates the conditions of our setting: $\preddist$ does not share a sample space with the data distributions.
        We consider an extension of our work to the CP setting an avenue for future work.

        \textit{Credal Bayesian deep learners} draw from both the Bayesian and CP perspectives, outputting a set of posterior predictive distributions \citep{caprio_cbdl_2024}.
        Thus, both the above interpretations of $\preddist$ can apply to this case.

\section{EPISTEMIC ERROR BOUND}\label{sec:neighbors}
    We here introduce the concept of an epistemic error bound.
    We first provide a formal definition of epistemic (test-time) error (\Cref{def:epistemic}), which leverages the total variation (TV) distance.
    \begin{definition}[Total variation (TV) distance ($\mathrm{d_{TV}}$)]\label{def:tv}
        The total variation (TV) distance between two distinct distributions $\dist$ and $\repl{\dist}$ is
        $$\tv{\dist}{\repl{\dist}} \coloneqq \sup_{\mathbf{a} \in \mathbf{A}} \left| 
        \dist(\mathbf{a}) - \repl{\dist}(\mathbf{a}) \right|$$
        where $\mathbf{A}$ is the $\sigma$-algebra over the shared support of $\dist$ and $\repl{\dist}$.
    \end{definition}
    
    We define \textbf{epistemic }(test-time) \textbf{error} (henceforth, epistemic error) as follows.
    \begin{definition}[Epistemic error]\label{def:epistemic}
        The learner's epistemic error in characterizing a target task distribution $\targettaskdist$ with predictor $\preddist$ is
        $$\epistemicerr \coloneqq \tv{\preddist}{\targettaskdist}.$$
    \end{definition}
    In other words, epistemic error is a special case of \Cref{def:tv} where $\dist = \preddist$, $\repl{\dist} = \targettaskdist$, and $\mathbf{A}=\datavals$.
    
    The TV distance has several desirable characteristics, satisfying the properties of both an $f$-divergence and integral probability metric \citep{birrell_divergences_2022}.
    However, our concept of an epistemic error bound does not rely on a specific definition of epistemic error, and our results extend immediately to any function of $\preddist$ and $\targettask$ that is a lower bound on some function of the TV distance.
    In \Cref{sec:distance}, we state our main result using definitions of epistemic error in terms of other measures.\footnote{See also \citet{sale_second-order_2024} for a thorough investigation of the suitability of various divergence measures of related concepts, such as epistemic uncertainty.}

    Equipped with a formal definition of epistemic error, we now introduce the concept of an epistemic error bound.
    \begin{definition}[Epistemic error bound]
        Given a predictor $\preddist \in \modelclass$ and a target task distribution $\hypertargetdist$, if
        $$\trueprob{ \epistemicerr \geq \margin } \leq \marginprob,$$
        we say the learner experiences an \textbf{epistemic error margin} greater than $\margin$ with probability at most $\marginprob$.
        The bound is probabilistic because the target task is a random quantity, i.e., the statement reflects stochasticity in the realization of $\targettask \sim \hypertargetdist$.
    \end{definition}

\section{GENERAL RESULT}\label{sec:result}
    Our main result is \Cref{thm:imperfect-distshift}: an epistemic error bound for the multitask learning setting, which applies in the presence of distribution shift and if learning is imperfect.
    Our results make the following assumption:
    \begin{assumption}\label{as:closed}
        A task distribution $\hyperdist$ satisfies this assumption if $\forall \taskdist \in \support{\hyperdist}$, $\tau_{\hyperdist}\left( \taskdist \right) \coloneqq \left\{ \left| \taskdist\left( \dataval \right) - \bary{\hyperdist}\left( \dataval \right) \right| : ~ \dataval \in \datavals \right\}$ is a closed subset of the unit interval.
    \end{assumption}
    \begin{remark}\label{rem:closed}
        If \Cref{as:closed} holds, the epistemic error is practically attainable, in the sense that there is some $\dataval \in \datavals$ for which $\epistemicerr = \left| \preddist(\dataval) - \targettask(\dataval) \right|$ (\Cref{prop:sup-attain} in \Cref{ap:proof-multitask}).
        Notice that it always holds in the case of a finite sample space.
    \end{remark}

    We impose \Cref{as:closed} primarily for clarity of exposition: If it is violated, all our results can be modified to absorb an arbitrarily small constant into the epistemic error margin.
    \Cref{lem:multitask,cor:not-closed} in \Cref{ap:proofs} provide statements of the result in which the assumption is leveraged and of our main result, respectively, when \Cref{as:closed} is violated.

    \paragraph{Outline of results}
        While we defer complete proofs to \Cref{ap:proofs}, we here provide an outline of our proof techniques.
        Conceptualizing possible data distributions in a metric space endowed with the TV distance metric, the length of a path $P_0 \rightarrow P_1 \rightarrow P_2$ can be interpreted as the cumulative error in using $P_0$ to represent $P_1$ and in using $P_1$ to represent $P_2$.
        If a learner originates at $P_0$ and considers $P_2$ its target destination, an immediate consequence of the triangle inequality is that it is better, in terms of error, to travel directly to $P_2$ than to ``stop over'' at $P_1$.

        We construct \Cref{thm:imperfect-distshift} iteratively, providing standalone results in progressively more general settings.
        \Cref{sec:multitask} provides an epistemic error bound in a multitask learning setting where learning is perfect and distribution shift absent.
        \Cref{sec:imperfect,sec:indist} relax these conditions: \Cref{sec:imperfect} allows for imperfect learning, and \Cref{sec:indist} additionally allows for distribution shift.
        In these sections, we apply the triangle inequality to bound the learner's distance from the source data distribution (as a function of model restrictions $\bias$ and data scarcity $\convergence$) and the distance from the source to target data distribution (i.e., the amount of distribution shift $\distshift$), respectively.

        In some cases the epistemic error margin may be considerably loose (at the extreme, if we know nothing about where the learner originates, we cannot know how long its journey will be).
        In \Cref{sec:negtransfer}, we discuss an interpretation of the amount of looseness in the margin in terms of the phenomenon of negative transfer.
        
    \subsection{Multitask Learning}\label{sec:multitask}
        We begin by assessing the additional risk that the learner takes on by learning on the basis of data from multiple, as opposed to a single, source tasks.
        \Cref{cor:perfect-no-distshift} provides an epistemic error bound for the \textit{perfect learner in the absence of distribution shift}.
        
        \begin{lemma}[Epistemic error depends on task variability]\label{cor:perfect-no-distshift}
            Given a predictor $\preddist$, a source multitask distribution $\hypersourcedist$ satisfying \Cref{as:closed}, $\preddist = \bary{\hypersourcedist}$ (\textbf{perfect learning}), $\hypertargetdist = \hypersourcedist$ (\textbf{no distribution shift}), and any $\margin \in \Rplus$,
            $$\trueprob{ \epistemicerr \geq \margin } \leq \frac{\maxvar{\hypertargetdist}}{\margin^2}.$$
        \end{lemma}
        \Cref{cor:perfect-no-distshift} tells us that the learner is more likely to achieve a given margin when target tasks are less variable.
        It is a consequence of a concentration inequality which controls how much a source task can deviate from the source data distribution as a function of source task variability $\maxvar{\hypersourcedist}$ (\Cref{ap:proof-multitask}).\footnote{
            Notice however that the result in \Cref{cor:perfect-no-distshift} is non-vacuous only if $\margin > \sqrt{\maxvar{\hypertargetdist}}$.
        }

    \subsection{Imperfect Multitask Learning}\label{sec:imperfect}
        We now extend the result in \Cref{cor:perfect-no-distshift} to the imperfect learning setting.
        Recall from \Cref{sec:formulation} that imperfect learning occurs when the learner's predictive distribution imperfectly characterizes the source data distribution ($\preddist \neq \bary{\hypersourcedist}$).
        Imperfect learning can occur because of limitations of the learner (e.g., model restrictions) or of the data it learns from (e.g., data scarcity).
        To capture these distinct sources of imperfect learning, we differentiate between the learner's \textit{approximation bias} and \textit{lack of convergence}.
        
        Approximation bias ($\bias$) captures how well the class of models $\modelclass$ entertained by the learner can in principle approximate the source data distribution.
        \begin{definition}[Approximation bias ($\bias$)]\label{def:bias}
            The degree of approximation bias is
            $$\bias \coloneqq \tv{\bestfitdist{S}}{\bary{\hypersourcedist}},$$
            where
            \begin{equation}\label{eq:bestfitdist}
                \bestfitdist{S} \coloneqq \argmin_{P \in \modelclass}{\tv{P}{\bary{\hypersourcedist}}}
            \end{equation}
            is the \textbf{best approximation} within the learner's model class.
        \end{definition}
        In the perfect learning setting, \Cref{eq:bestfitdist} evaluates to 0 for $\preddist = \bary{\hypersourcedist}$ (there is no approximation bias).
        The degree of approximation bias does not depend on properties of the specific source data (such as the sample size) or on the specific optimization principle or algorithm used.

        Lack of convergence ($\convergence$) captures how far the learner's predictor $\preddist$ is to the best approximation $\bestfitdist{S}$.
        \begin{definition}[Lack of convergence ($\convergence$)]\label{def:convergence}
            The degree of lack of convergence is
            $$\convergence \coloneqq \tv{\preddist}{\bestfitdist{S}}.$$
        \end{definition}
        In general, the degree of convergence does depend on the specific source data and optimization procedure.
        \Cref{def:convergence} is reminiscent of results from asymptotic statistical theory on, for instance, the convergence rates of Bayesian parameter distributions (\citealp{doob_application_1949,van-der-vaart_asymptotic_2000}; see also discussion in \Cref{ap:ast} and results in \Cref{sec:cases}).
        
        \Cref{lem:imperfect-no-distshift} provides an epistemic error bound for the \textit{imperfect learner in the absence of distribution shift}.
        \begin{lemma}[Epistemic error depends on task variability, model restrictions, and data scarcity]\label{lem:imperfect-no-distshift}
             Given a predictor $\preddist$, a source multitask distribution $\hypersourcedist$ satisfying \Cref{as:closed}, $\hypertargetdist = \hypersourcedist$ (\textbf{no distribution shift}), and any $\margin \in \Rplus$,
             $$\trueprob{ \epistemicerr \geq \margin + \bias + \convergence } \leq \frac{\maxvar{\hypertargetdist}}{\margin^2}.$$
        \end{lemma}
        \Cref{lem:imperfect-no-distshift} shows that in the case of imperfect learning, the epistemic error margin absorbs the degrees of approximation bias and lack of convergence.
        The consequence of allowing the learner's ``journey'' to the target task to originate away from the source data distribution is the accrual of the cost to ``travel'' to the best approximation ($\convergence$) and then to the source data distribution ($\bias$).

    \subsection{Imperfect Multitask Learning When Distributions Shift}\label{sec:indist}
        Finally, we now extend the result in \Cref{lem:imperfect-no-distshift} to the setting of imperfect learning with distribution shift.
        Recall from \Cref{sec:formulation} that we say distribution shift occurs when the target task distribution differs from the source task distribution ($\hypertargetdist \neq \hypersourcedist$).

        The extent of distribution shift captures the distance between the source and target task distributions.
        \begin{definition}[Extent of distribution shift ($\distshift$)]\label{def:distshift}
            The extent of distribution shift is
            $$\distshift \coloneqq \tv{\bary{\hypersourcedist}}{\bary{\hypertargetdist}}.$$
        \end{definition}

        \begin{remark}\label{rem:distshift}
            Notice that \Cref{def:distshift} accommodates any form of shift in the distribution of data the learner aims to predict.
            Since the barycenter of a task distribution $\bary{\hyperdist}$ fully specifies a first-order data distribution, the barycenter of the task distribution shifts (i.e., $\distshift > 0$) if and only if the first-order data distribution shifts from training to testing.

            As an example, consider the scenario of covariate shift, a shift in the input distribution in a supervised learning context.
            Even in the presence of covariate shift, the conditional distribution of outputs given inputs is unchanged, and so $\distshift = 0$.
            Closely related measures have been leveraged to measure the extent of covariate shift in the context of binary classification \citep{ben-david_analysis_2006,ben-david_theory_2010}.
        \end{remark}

        \Cref{thm:imperfect-distshift} provides an epistemic error bound for the \textit{imperfect learner in the presence of distribution shift}.
        \begin{theorem}[Epistemic error depends on task variability, model restrictions, data scarcity, and distribution shift]\label{thm:imperfect-distshift}
            Given a predictor $\preddist$, a source task distribution $\hypersourcedist$, a target multitask distribution $\hypertargetdist$ satisfying \Cref{as:closed}, and any $\margin \in \Rplus$,
            $$\trueprob{ \epistemicerr \geq \margin + \bias + \convergence + \distshift } \leq \frac{\maxvar{\hypertargetdist}}{\margin^2}.$$
        \end{theorem}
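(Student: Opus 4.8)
The plan is to reduce \Cref{thm:imperfect-distshift} to \Cref{cor:perfect-no-distshift} by isolating the only genuinely random quantity --- the realized target task $\targettaskdist \sim \hypertargetdist$ --- and absorbing everything else into deterministic constants via the triangle inequality for the TV distance. The three quantities $\bias$, $\convergence$, and $\distshift$ are all fixed once $\modelclass$, $\preddist$, $\hypersourcedist$, and $\hypertargetdist$ are specified, carrying no dependence on the draw of $\targettaskdist$; recognizing this separation is the conceptual heart of the argument. It also explains why, unlike in \Cref{lem:imperfect-no-distshift}, second-order boundedness is required only of $\hypertargetdist$: the source quantities enter solely through deterministic TV distances, which are automatically controlled.

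First I would chain the triangle inequality (\Cref{def:tv} makes $\mathrm{d_{TV}}$ a genuine metric) through the intermediate distributions $\bestfitdist{S}$, $\bary{\hypersourcedist}$, and $\bary{\hypertargetdist}$, in that order, to obtain, for every realization of $\targettaskdist$, the bound
$$\epistemicerr = \tv{\preddist}{\targettaskdist} \leq \tv{\preddist}{\bestfitdist{S}} + \tv{\bestfitdist{S}}{\bary{\hypersourcedist}} + \tv{\bary{\hypersourcedist}}{\bary{\hypertargetdist}} + \tv{\bary{\hypertargetdist}}{\targettaskdist}.$$
By \Cref{def:convergence}, \Cref{def:bias}, and \Cref{def:distshift}, the first three terms are exactly $\convergence$, $\bias$, and $\distshift$, so this reads $\epistemicerr \leq \convergence + \bias + \distshift + \tv{\bary{\hypertargetdist}}{\targettaskdist}$.

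Next I would translate this deterministic inequality into a containment of events. On the event $\{\epistemicerr \geq \margin + \bias + \convergence + \distshift\}$, combining with the bound above and cancelling the common constants $\bias + \convergence + \distshift$ yields $\margin \leq \tv{\bary{\hypertargetdist}}{\targettaskdist}$. Hence $\{\epistemicerr \geq \margin + \bias + \convergence + \distshift\} \subseteq \{\tv{\bary{\hypertargetdist}}{\targettaskdist} \geq \margin\}$, and monotonicity of probability gives $\trueprob{\epistemicerr \geq \margin + \bias + \convergence + \distshift} \leq \trueprob{\tv{\bary{\hypertargetdist}}{\targettaskdist} \geq \margin}$.

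Finally I would invoke \Cref{cor:perfect-no-distshift} to control the right-hand side. The quantity $\tv{\bary{\hypertargetdist}}{\targettaskdist}$ is precisely the epistemic error of the \emph{perfect} learner without distribution shift whose target task distribution is $\hypertargetdist$ (i.e.\ with $\preddist = \bary{\hypertargetdist}$), so the corollary's bound depends on $\hypertargetdist$ alone and applies to any second-order bounded $\hypertargetdist$, yielding $\trueprob{\tv{\bary{\hypertargetdist}}{\targettaskdist} \geq \margin} \leq \sup_{\dataval \in \datavals} \var{\hypertargetdist} / \margin^2$. The main obstacle is not any single estimate but getting the order of the triangle-inequality expansion right so that the learning- and environment-level discrepancies telescope into exactly $\convergence + \bias + \distshift$; once that decomposition is in place, the event containment and the appeal to \Cref{cor:perfect-no-distshift} are routine.
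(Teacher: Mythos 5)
Your proposal is correct and is essentially the paper's own argument: the same triangle-inequality chain through $\bestfitdist{S}$, $\bary{\hypersourcedist}$, and $\bary{\hypertargetdist}$ (the paper packages the first two steps as \Cref{lem:bc}), followed by the same Chebyshev-based concentration of $\tv{\bary{\hypertargetdist}}{\targettaskdist}$, which the paper invokes directly as \Cref{lem:multitask} and you invoke equivalently via the degenerate instantiation of \Cref{cor:perfect-no-distshift} with $\hypersourcedist = \hypertargetdist$ and $\preddist = \bary{\hypertargetdist}$. Your presentation as a deterministic pointwise bound followed by an event containment is a cleaner way to organize the paper's in-probability manipulations, but it is the same proof.
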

        \Cref{thm:imperfect-distshift} shows the consequence of moving the learner's ``destination'' away from the source data distribution: Now, it accrues the cost of ``travel'' from the source to target data distribution ($\distshift$).
        \Cref{cor:not-closed} in \Cref{ap:imperfect-distshift} provides a slightly modified version of \Cref{thm:imperfect-distshift} that does not require \Cref{as:closed}.

        \paragraph{Empirical illustration}
            To illustrate the implications of \Cref{thm:imperfect-distshift}, we constructed a multitask learning setting using the Iris data set from the UCI Machine Learning Repository \citep{iris_53}.
            In this setting, an imperfect learner encountered several target tasks that differed systematically from the source data on which it based its predictions.
            Details of the learning setting are provided in \Cref{ap:iris}.
            \begin{figure}[t!]
                \includegraphics[width=\linewidth]{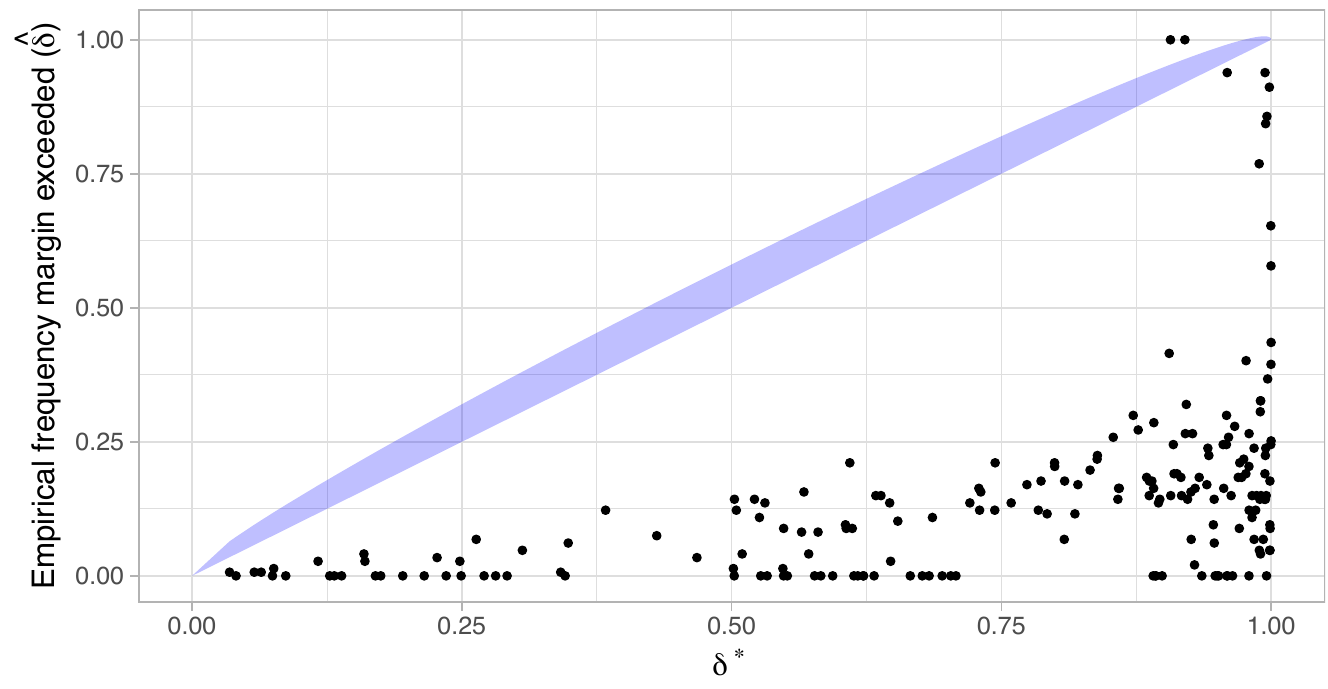}
                \caption{Results in a setting constructed using the Iris data set \citep{iris_53} to demonstrate the conclusions of \Cref{thm:imperfect-distshift}.
                    $x$-axis: $\delta^{\star} = \maxvar{\hypertargetdist} / \alpha^2$.
                    $y$-axis: $\hat{\delta}$ is the proportion of target tasks for which the epistemic error margin was exceeded.
                    The shaded region denotes approximately two standard errors above the value of $\delta^{\star}$ (computed according to a normal approximation).
                }
                \label{fig:iris-thm1-main}
            \end{figure}
            
            \Cref{fig:iris-thm1-main} shows that the empirical probability of exceeding the epistemic error margin almost never exceeds the theoretical value provided in \Cref{thm:imperfect-distshift}.\footnote{Note that where the bound appears to be exceeded at $\delta^{\star} \approx 1$, the normal approximation to the 95\% confidence region is especially inaccurate.}
            It also shows that the bound is in some cases considerably loose: The empirical probability of exceeding the epistemic error margin is often much smaller than the theoretical probability.
            This may indicate that the epistemic error margin can be much larger than the actual epistemic error the learner incurs.
            \Cref{sec:negtransfer} discusses a conceptual relationship between the amount of looseness in the epistemic error margin and the phenomenon of negative transfer.

        \paragraph{Incorporating aleatoric uncertainty}
            As discussed in \Cref{sec:related-work}, an epistemic error bound isolates the degree of reducible uncertainty, while a generalization bound captures error that can arise from epistemic (reducible) or aleatoric (irreducible) factors.

            \Cref{cor:cross-ent} extends \Cref{thm:imperfect-distshift} to provide a generalization bound on the learner's cross-entropy loss, a common measure of generalization performance.
            \Cref{cor:cross-ent} illustrates the contribution of aleatoric uncertainty to generalization error.
            The proof leverages a reverse Pinsker inequality \citep{sason_upper_2015} and is given in \Cref{sec:distance}.

            \begin{corollary}[Generalization error depends on aleatoric factors]\label{cor:cross-ent}
                Given that $\lvert \datasamples \rvert \in \mathbb{N}$ (\textbf{finite sample space}), a predictor $\preddist$ with p.m.f. $\widehat{p} ~ : ~ \datasamples \mapsto \left[ \underline{v}, \overline{v} \right]$, $\underline{v} > 0$, a source task distribution $\hypersourcedist$, a target multitask distribution $\hypertargetdist$, and any $\margin \in \Rplus$,
                $$\trueprob{ \loss \left( \preddist, \targettaskdist \right) \geq \frac{\left( \margin + \bias + \convergence + \distshift \right)^2}{\underline{v} / 2} + \mathcal{E} } \leq \frac{\maxvar{\hypertargetdist}}{\margin^2} $$
                where the cross-entropy loss function $\loss$ and entropy of the target task $\mathcal{E}$ are defined in \Cref{sec:distance}.
            \end{corollary}

            The entropy of the target task distribution $\mathcal{E}$ that appears in \Cref{cor:cross-ent} captures the dispersion of the target task distribution, i.e., of aleatoric uncertainty.
            The margin of generalization error that appears in \Cref{cor:cross-ent} decomposes generalization error into epistemic error (represented by the terms that also appear in the statement of \Cref{thm:imperfect-distshift}) and error due to aleatoric factors (represented by $\mathcal{E}$).

        \paragraph{Distribution shift from the learner's perspective}
            \Cref{def:distshift} provides a definition of distribution shift that is agnostic to the learner's chosen model class.
            However, some work conceptualizes distribution shift in a way that depends on the learner's chosen model class \citep{baxter_model_2000,redko_survey_2019}.
            This conceptualization is related to the concept of an inductive bias \citep{baxter_model_2000}: Specifying a model on the basis of an appropriate inductive bias (such as reduced hypothesis class complexity) can enhance robustness to distribution shift.

            \Cref{def:distshift-model} provides an alternative definition of the extent of distribution shift that captures what the learner perceives.
            Recall from \Cref{def:bias} that $\bestfitdist{S}$ is the best approximation to the source data within the learner's model class.
            \Cref{def:distshift-model} captures how much better $\bestfitdist{S}$ approximates the source data than the target data.
            It can be thought of as the {\em degree to which the learner's inductive biases induce robustness to the distribution shifts it encounters}.\footnote{Notice that unlike $\distshift$, $\distshiftLearner$ can be negative, implying that the learner's inductive biases can lead to a model that performs \textit{better} under distribution shift (see related discussion in \citet{tripuraneni_overparameterization_2021} on the strength of a distribution shift).}
            \begin{definition}[Extent of distribution shift: learner's view ($\distshiftLearner$)]\label{def:distshift-model}
                The extent of distribution shift the learner perceives is
                $$\distshiftLearner \coloneqq \tv{\bestfitdist{S}}{\bary{\hypertargetdist}} - \bias.$$
            \end{definition}

            \Cref{cor:excess-bias} shows that incorporating this alternative definition of distribution shift yields a similar decomposition to that given in \Cref{thm:imperfect-distshift}.
            \begin{theorem}[Epistemic error bound from the learner's perspective on distribution shift]\label{cor:excess-bias}
                Given a predictor $\preddist$, a source task distribution $\hypersourcedist$, a target multitask distribution $\hypertargetdist$ satisfying \Cref{as:closed}, and any $\margin \in \Rplus$,
                $$\trueprob{ \epistemicerr \geq \margin + \bias + \convergence + \distshiftLearner } \leq \frac{\maxvar{\hypertargetdist}}{\margin^2}.$$
            \end{theorem}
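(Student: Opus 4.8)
The plan is to mirror the proof of \Cref{thm:imperfect-distshift}, but to regroup the triangle inequality so that the approximation bias cancels against the $-\bias$ term hidden inside $\distshiftLearner$. The starting observation is purely algebraic: substituting \Cref{def:distshift-model} gives
$$\bias + \convergence + \distshiftLearner = \convergence + \tv{\bestfitdist{S}}{\bary{\hypertargetdist}},$$
so the threshold in the claim, $\margin + \bias + \convergence + \distshiftLearner$, equals $\margin + \convergence + \tv{\bestfitdist{S}}{\bary{\hypertargetdist}}$. The whole point of the learner's-view quantity is to replace the two-leg detour through the source barycenter $\bary{\hypersourcedist}$ (namely $\bias + \distshift$) by the single leg $\tv{\bestfitdist{S}}{\bary{\hypertargetdist}}$.

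Next I would bound the epistemic error by the triangle inequality for the TV metric, applied along the chain $\preddist \to \bestfitdist{S} \to \bary{\hypertargetdist} \to \targettaskdist$:
$$\epistemicerr = \tv{\preddist}{\targettaskdist} \leq \tv{\preddist}{\bestfitdist{S}} + \tv{\bestfitdist{S}}{\bary{\hypertargetdist}} + \tv{\bary{\hypertargetdist}}{\targettaskdist} = \convergence + \tv{\bestfitdist{S}}{\bary{\hypertargetdist}} + \tv{\bary{\hypertargetdist}}{\targettaskdist},$$
where the final equality is \Cref{def:convergence}. Combining this deterministic upper bound with the event in the claim yields a containment: whenever $\epistemicerr \geq \margin + \convergence + \tv{\bestfitdist{S}}{\bary{\hypertargetdist}}$ holds, the two displays force $\tv{\bary{\hypertargetdist}}{\targettaskdist} \geq \margin$. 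Hence the event in the theorem is a subset of $\left\{ \tv{\bary{\hypertargetdist}}{\targettaskdist} \geq \margin \right\}$, and its probability is no larger.

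Finally I would invoke the Chebyshev-type inequality that already underlies \Cref{cor:perfect-no-distshift}: for a random $\targettaskdist \sim \hypertargetdist$ with $\hypertargetdist$ second-order bounded,
$$\trueprob{ \tv{\bary{\hypertargetdist}}{\targettaskdist} \geq \margin } \leq \frac{\sup_{\dataval \in \datavals} \var{\hypertargetdist}}{\margin^2}.$$
This step is self-contained: it concerns only the dispersion (\Cref{def:var}) of a sampled task about its barycenter $\bary{\hypertargetdist}$ (\Cref{def:bary}), and does not involve $\preddist$ or $\bestfitdist{S}$, so the perfect-learning and no-shift hypotheses of \Cref{cor:perfect-no-distshift} play no role in its validity. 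Chaining the containment with this bound closes the argument.

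The step I expect to require the most care is the first, conceptual one rather than any calculation: one must resist deducing the result from \Cref{thm:imperfect-distshift} by monotonicity. Since $\tv{\bestfitdist{S}}{\bary{\hypertargetdist}} \leq \bias + \distshift$ by the triangle inequality, we have $\distshiftLearner \leq \distshift$, so the threshold here is \emph{smaller} and the corresponding event is \emph{larger}; monotonicity therefore runs in the wrong direction to recover an upper bound from the main theorem. The regrouped triangle inequality above is exactly what makes the $\bias$ cancellation exact and lets the same Chebyshev bound apply directly.
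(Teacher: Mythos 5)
Your proposal is correct and matches the paper's own proof in essence: the paper also bounds $\tv{\preddist}{\bary{\hypertargetdist}}$ via the chain $\preddist \to \bestfitdist{S} \to \bary{\hypertargetdist}$ (packaged as \Cref{lem:d-pred-baryT}, where the $\bias$ cancellation inside $\distshiftLearner$ happens exactly as in your regrouping), then appends the leg $\bary{\hypertargetdist} \to \targettaskdist$ and applies the same Chebyshev-type bound (\Cref{lem:multitask}) to $\targettaskdist \sim \hypertargetdist$. Your explicit event-containment phrasing and the closing remark that monotonicity from \Cref{thm:imperfect-distshift} runs the wrong way (since $\distshiftLearner \leq \distshift$ by \Cref{prop:d-compare}) are accurate readings of the same argument, not a different route.
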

            As we show in \Cref{ap:imperfect-distshift}, \Cref{cor:excess-bias} provides a tighter epistemic error margin than \Cref{thm:imperfect-distshift}.\footnote{
                This is reminiscent of results from domain adaptation theory in which incorporating terms that reflect a learner's ``capacity to adapt'' generally leads to tighter inequalities \citep{redko_survey_2019}.
            }
            It reveals that the learner faces a trade-off (akin to, but distinct from, a bias--variance trade-off): To reduce epistemic error, the learner's predictor must fit but not overfit to the source data (small $\bias$ and $\distshiftLearner$).

        \paragraph{Computability of $\distshift$}
            So far, we have treated the components of the epistemic error margin as theoretical quantities.
            In some cases, one may want to directly approximate the terms in the bound to, for instance, obtain a concrete guarantee on a model's performance.
            In general, the learner does not have access to $\bestfitdist{}$ (to compute $\bias$), $\bary{\hypersourcedist}$ (to compute $\convergence$), or $\bary{\hypertargetdist}$ (to compute $\distshift$).
            If it did, it would not only be able to compute the epistemic error margin exactly, but have solved the learning problem it confronts in the first place.
            The learner does, however, have the source data, i.e., a set of finite samples from $\bary{\hypersourcedist}$.
            If the learner also has access to finite samples from the target tasks (from fine-tuning or later deployment in these tasks), this can facilitate a (perhaps retrospective) numerical approximation of the extent of distribution shift $\distshift$.

            An additional challenge in computing the terms in the bound is that the TV distance is generally intractable.
            As we discuss in \Cref{sec:distance}, the TV distance has close connections with other distance measures which can be more reliably estimated numerically.
            For example, Pinsker's inequality can be used to approximate $\distshift$ as a function of empirical estimates of the Kullback-Leibler divergence from $\bary{\hypersourcedist}$ to $\bary{\hypertargetdist}$ (we leverage this to estimate the TV distance in our synthetic empirical examples; see \Cref{ap:experiments}).\footnote{As another example, an empirical estimate of the $L_1$ distance $\ell_{L_1}$ between $\bary{\hypersourcedist}$ and $\bary{\hypertargetdist}$ provides an estimate of $\distshift$ by leveraging the fact that $\ell_{L_1}\left( \bary{\hypersourcedist}, \bary{\hypertargetdist} \right) = 2\distshift$.}

    \subsection{Negative Transfer}\label{sec:negtransfer}
        Negative transfer refers to the empirical phenomenon that the learner predicts worse in the target task after learning from the source data than before \citep{wang_characterizing_2019,zhang_survey_2023,sloman_proxy_2025}.
        This has a natural interpretation with respect to the quantities in \Cref{thm:imperfect-distshift}: We say \textbf{negative transfer} occurs when predictors closer to the source data distribution $\bary{\hypersourcedist}$ (i.e., which better characterize the source data distribution) commit larger epistemic errors than those further away.
    
        Informally, the provided bound is especially loose when it is much more direct to travel from $\preddist$ to $\targettask$ than to take a circuitous route via $\bary{\hypersourcedist}$.
        This is exactly the setting of negative transfer: The learner is better off applying a ``na\"{i}ve'' predictor when encountering $\targettask$ than they are adjusting their predictor according to the data accrued to $\bary{\hypersourcedist}$.
        
        This observation cautions of a conceptual slippery slope:
        In the presence of positive transfer, our results equip the learner with a framework to reason about how to reduce their epistemic error.
        However, the learners in most need of reducing their epistemic error (who experience negative transfer) are the least equipped to reason about how to do so (whose epistemic error margin does not closely track the epistemic error they experience).
        
        \paragraph{Empirical illustration}
            \Cref{fig:experiments-negtransfer,fig:iris-negtransfer} in \Cref{ap:experiments} demonstrate empirically the relationship between looseness in the epistemic error margin and the learner's experience of negative transfer, in the context of a synthetic Bayesian transfer learning setting (see \Cref{sec:cases}) and the setting constructed using the Iris data (used to produce \Cref{fig:iris-thm1-main}), respectively.
            Details of the learning settings and results are provided in \Cref{ap:experiments}.

\section{SPECIAL CASES}\label{sec:cases}
    While \Cref{thm:imperfect-distshift} makes no assumptions on the learning procedure nor on the nature of the distribution shift, additional knowledge can facilitate expression of the sources of epistemic error in ways that reflect specific aspects of the setting at hand.
    We here demonstrate this in the context of two probabilistic learning frameworks, Bayesian transfer learning and total variation neighborhoods.
    In these settings, \Cref{thm:imperfect-distshift} provides insight into the role of convergence of the posterior distribution and neighborhood size, respectively.
    To the extent that the modeler can control these aspects (through, for instance, specification of a diffuse prior), our results also provide the modeler with a means to control the epistemic error margin.

    \paragraph{Bayesian transfer learning}\label{sec:learningbias}
        \newcommand{\prior}{P_0^{\thetavals}}
        \newcommand{\posterior}{P_1^{\thetavals}}
        \newcommand{\bestfitthetadist}{P_{\star}^{\thetavals}}
        \newcommand{\CTheta}{\convergence^{\thetavals}}
    
        is a paradigm in which the learner uses Bayesian inference to infer a parameter that is shared across the source and target tasks \citep{suder_bayesian_2023}.
        The Bayesian transfer learner specifies a model class parameterized by a parameter $\thetaval \in \thetavals$, representing their uncertainty about the best-fitting value $\thetaval$ as a distribution $\prior$.
        After viewing data, the learner forms their posterior $\posterior$ with p.d.f. $p_1$.
        The learner's predictor $\preddist$ has density
        \begin{align*}
            \preddistpdf(\datasample) &= \int_{\thetavals} p(\datasample \vert \thetaval) ~ p_1(\thetaval) ~ \text{d}\thetaval.
        \end{align*}

        The predictor that best approximates the source data distribution $\bestfitdist{S}$ is induced by the \textbf{best approximate parameter distribution} $\bestfitthetadist$.

        \Cref{cor:bayesian} provides an epistemic error bound for the Bayesian transfer learner.
        \begin{corollary}[Bayesian transfer learning]\label{cor:bayesian}
            Given a posterior $\posterior$, a source task distribution $\hypersourcedist$, a target multitask distribution $\hypertargetdist$ satisfying \Cref{as:closed}, and any $\margin \in \Rplus$,
            $$\trueprob{ \epistemicerr \geq \margin + \bias + \CTheta + \distshift } \leq \frac{\maxvar{\hypertargetdist}}{\margin^2}$$
            where $\CTheta \coloneqq \tv{\posterior}{\bestfitthetadist}$.
        \end{corollary}
        \Cref{cor:bayesian} shows that the epistemic error margin depends on the degree to which the learner's posterior distribution resembles the best approximate parameter distribution: Posterior convergence reduces epistemic error.

        \vspace{2mm}
        \noindent \textit{Empirical illustration.}
            \Cref{fig:convergence} in \Cref{ap:btl-tv} shows the empirical relationship between posterior convergence and epistemic error in a synthetic Bayesian linear regression setting.
            Here, the posterior probability that the model assigns to the source data-generating coefficients (which captures the similarity between $\posterior$ and $\bestfitthetadist$) is negatively correlated with the epistemic error the model incurs.

    \paragraph{Total variation neighborhoods}
        model distribution shift, bounding the distance of the target task from the source task(s).
        They appear in contexts that range from the distributional robustness \citep{blanchet_distributionally_2024} to imprecise probabilities literature \citep{wasserman_bayes_1990}.

        \Cref{cor:eps} provides an epistemic error bound for the learner encountering shifts confined to a total variation neighborhood, defined in \Cref{as:eps}.
        \begin{assumption}\label{as:eps}
            There exists an $\eps \in (0,1)$ such that, for all $Q^t \in \mathrm{supp}({\hypertargetdist})$, we have that $\tv{\targettaskdist}{\task^s} \leq \eps$ for some $Q^s \in \mathrm{supp}({\hypersourcedist})$.
        \end{assumption}
        A bound under an alternative model, which considers the distance between the source and target task distributions, is provided in \Cref{ap:cases}.

        \begin{corollary}\label{cor:eps}
            Given a predictor $\preddist$, a target multitask distribution $\hypertargetdist$ satisfying \Cref{as:closed} and \Cref{as:closed-var} in \Cref{ap:tv} with p.d.f. $\hypertargetpdf ~ : ~ \distvals^T \mapsto \left[\minp^T,\maxp^T\right]$, a source task distribution $\hypersourcedist$ with p.d.f. $\hypersourcepdf ~ : ~ \distvals^T \mapsto \left[\minp^S,\maxp^S\right]$, $\minp^S > 0$, an $\eps \in (0,1)$ satisfying \Cref{as:eps}, and any $\margin \in \Rplus$,
            $$\trueprob{\epistemicerr \geq \margin + \bias + \convergence + \distshift} \leq \frac{\beta}{\margin^2} \left( {\mathbb{V}_{\overline{\dataval}} \left[ \hypersourcedist \right] + \mathrm{vol}\left( \hypertargetdist \right)} \right)$$
            where $\beta = \maxp^T/\minp^S$, $\mathrm{vol}\left( \hypertargetdist \right) \coloneqq \left( \mathrm{diam}\left( \hypersourcedist \right) + \eps \right)^2$, and $\overline{\dataval} \in \datavals$ and the diameter of the source task distribution $\mathrm{diam}\left( \hypersourcedist \right)$ are defined in \Cref{ap:tv}.
        \end{corollary}
        Intuitively, the learner's epistemic error increases with the scope and variability of the source tasks and with the size of the total variation neighborhood $\epsilon$.
        
        Notice that the bound in \Cref{cor:eps} becomes increasingly vacuous as $\maxp^T$ increases or $\minp^S$ decreases.
        For a task distribution $\hyperdist$ with p.d.f. $\hyperdistpdf ~ : ~ \distvals^T \mapsto \left[ \minp, \maxp \right]$, the size of the codomain $\left[ \minp, \maxp \right]$ is inherently connected to the (epistemic) uncertainty about the task the learner confronting $\hyperdist$ will encounter.
        A large (small) codomain indicates relative (un)certainty of encountering, or of not encountering, certain tasks.
        \Cref{cor:eps} is most informative in instances of relative task uncertainty.

    \vspace{2mm}
    \noindent \textit{Empirical illustration.}
        \Cref{fig:epsilon,fig:epsilon-delta} in \Cref{ap:btl-tv} show the empirical relationship between $\epsilon$ and epistemic error when the Bayesian learner encounters synthetic data arising from the setting described in \Cref{as:eps}.
        Here, the learner incurs higher epistemic error when $\epsilon$ is larger (i.e., target tasks are sampled from a larger total variation neighborhood around source tasks).

    \vspace{2mm}
    \noindent An epistemic error bound for the setting of Bayesian transfer learning when distributions shift in total variation neighborhoods is immediate from \Cref{cor:bayesian,cor:eps}, and provided in \Cref{ap:blt-tv-theory}.

\section{DISCUSSION}\label{sec:discussion}
    Unlike a generalization bound, an epistemic error bound provides a guarantee of a learner's amount of \textit{reducible} error.
    We provided a general decompositional epistemic error bound for the challenging setting of imperfect multitask learning in the presence of distribution shift.

    \paragraph{Limitations}
        While we intended the setting developed in \Cref{sec:formulation} to be maximally general, some of the assumptions may nevertheless be violated in real-world settings.
        Future work could investigate the development of epistemic error bounds in an even more general setting, allowing, for instance, for time dependence in the realization of tasks and data.
        Future work should also look towards the validation of our findings in higher-dimensional settings.

\section*{Acknowledgments}
    The authors thank Yusuf Sale for helpful discussion.
    This work was supported by EU grant (101120237 and ERC ODD-ML 101201120) and the Research Council of Finland Flagship programme: Finnish Center for Artificial Intelligence FCAI and decisions 359207, 359567, 358958.
    SJS and SK were supported by the UKRI Turing AI World-Leading Researcher Fellowship, [EP/W002973/1].

\bibliographystyle{apalike}
\bibliography{bibliography}
\appendix
\begin{center}
	{\LARGE \textbf{Appendix}}
\end{center}

The appendix is organized as follows:
\begin{itemize}
    \item \Cref{ap:related-work} discusses related works in more detail.
    \item \Cref{ap:proofs} provides complete proofs of all the mathematical results in the paper.
    \item \Cref{sec:distance} states and proves corollaries of \Cref{thm:imperfect-distshift} that provide epistemic error bounds for definitions of epistemic error in terms of cross-entropy, L1 distance, and Hellinger distance.
    \item \Cref{ap:experiments} provides the results and details of empirical demonstrations of our theoretical results.
\end{itemize}

\section{ADDITIONAL RELATED WORKS}\label{ap:related-work}
    Below we briefly describe statistical learning theoretic paradigms that deal with settings of multitask learning or distribution shift.
    The bound we provide in \Cref{thm:imperfect-distshift} applies in a setting more general than those studied under these paradigms.

    \paragraph{Multitask domain generalization} provides generalization bounds for the multitask setting \citep{baxter_model_2000,maurer_bounds_2006,maurer_sparse_2013,maurer_excess_2013,liu_algorithm_2017,deshmukh_generalization_2019}.
    The settings in which these bounds are derived do not account for distribution shift, and many assume the learner adopts a specific learning procedure \citep{maurer_bounds_2006,maurer_excess_2013,maurer_sparse_2013,liu_algorithm_2017,deshmukh_generalization_2019}.
    
    \paragraph{Domain adaptation theory} provides generalization bounds that apply in certain cases of distribution shift \citep{redko_survey_2019}.
    Unlike in our setting, domain adaptive learners are aware of the distribution shift and usually have available sufficient data points from the target task to adapt their choice of predictor.
    
    \paragraph{Credal learning theory} is a recent extension of statistical learning theory to credal sets \citep{caprio_credal_2024}, a model of second-order uncertainty \citep{walley_statistical_1991,caprio_constriction_2023}.
    Credal sets have been used to model multitask learning \citep{singh_domain_2024,lu_ibcl_2024} and distribution shift \citep{caprio_cbdl_2024}.
    While \citet{caprio_credal_2024} consider specifically the empirical risk minimizer, our bound is agnostic to both the type of learner (allowing for imperfect learners who do not minimize empirical risk) and model of distribution shift.

\subsection{Connection to Asymptotic Statistical Theory}\label{ap:ast}
    Like statistical learning theory, asymptotic statistical theory (AST) is a paradigm for assessing the performance of a learner.
    In parametric applications, prediction is done indirectly by first selecting the value of a set of parameters that most resembles the training data.
    Since they characterize generalization performance directly, PAC bounds apply equally to parametric and non-parametric learners.
    AST results, instead, characterize the limiting distribution of these parameter estimates.
    They usually take the form
    $$\lim_{N \rightarrow \infty} \mathrm{d} \left( \dist^{\thetavals}_N, \overrightarrow{\dist}^{\thetavals} \right) = 0$$
    for an appropriate distance function $\mathrm{d}$, the learner's chosen parameter distribution $\dist^{\thetavals}_N$ and an appropriate asymptotic distribution $\overrightarrow{\dist}^{\thetavals}$.\footnote{
        Subscript $N$ denotes the dependence of $\dist^{\thetavals}_N$ on sample size $N$.
        We use capital $N$ to denote the size of a sample of (usually independent and identically distributed) data points.
    } Each asymptotic statistical theoretic result is specific to a particular rule by which the learner selects $\dist^{\thetavals}_N$.
    For example, seminal results show that likelihood-based procedures --- where parameters are chosen according to their ability to make the observed data seem likely --- converge towards a symmetric distribution centered on a value that ``fits best'' in a well-defined way \citep{le-cam_asymptotic_1953,van-der-vaart_asymptotic_2000}.
    Unlike SLT results, which depend on the empirical risk, AST results are guarantees that can be made before data are seen.
    However, these guarantees hold only in the asymptotic regime and may not characterize behavior for any finite $N$.
    A subset of AST is concerned with characterizing \textit{rates of convergence} (e.g., \citealp{shen_rates_2001}).
    This literature considers the sequence of chosen parameter distributions implied by increasing values of $N$, characterizing how this sequence behaves as a function of $N$.
    
    Although our framework characterizes the closeness of predictive distributions, and so is fundamentally different from AST, we are inspired by several core AST concepts: (i) The probabilistic predictors we consider are in practice often parametric probability models in which epistemic uncertainty is represented as uncertainty about the parameters' value (see in particular our analysis of Bayesian transfer learning in \Cref{sec:cases,ap:cases}), (ii) our results are stated in terms of the distance between distributions, and (iii) a core quantity in our main result is the degree of convergence, which would allow the substitution of results on convergence in specific learning paradigms into our generic framework.

\section{PROOFS}\label{ap:proofs}
    We here provide complete proofs of all the mathematical results in the paper.
    
    \subsection{General Result}
        \subsubsection{Multitask Learning}\label{ap:proof-multitask}
            We here prove \Cref{cor:perfect-no-distshift}:

            \vspace{2mm}
            \noindent \textbf{\Cref{cor:perfect-no-distshift}} (Epistemic error depends on task variability)\textbf{.}\textit{
                Given a model class $\modelclass$, a predictor $\preddist \in \modelclass$, a source multitask distribution $\hypersourcedist$ satisfying \Cref{as:closed}, $\preddist = \bary{\hypersourcedist}$ (\textbf{perfect learning}), and $\hypertargetdist = \hypersourcedist$ (\textbf{no distribution shift}),
                $$\trueprob{ \epistemicerr \geq \margin } \leq \frac{\maxvar{\hypertargetdist}}{\margin^2}$$
                for any $\alpha \in \Rplus$.
            }

            \vspace{2mm}
            We first prove the following lemma, which is leveraged in the proof of \Cref{cor:perfect-no-distshift}:
            \begin{lemma}\label{lem:multitask}
                Given a multitask distribution $\hyperdist$, $\forall \taskdist \in \support{\hyperdist}$,
                \begin{align} \trueprob{ \tv{\bary{\hyperdist}}{Q} \geq \margin + \Delta } \leq \frac{\maxvar{\hyperdist}}{\margin^2} \label{eq:multitask-gen} \end{align}
                for any $\alpha \in \Rplus$ and any $\Delta \in \mathbb{R}^+$ (i.e., for arbitrarily small $\Delta$).
                If $\hyperdist$ satisfies \Cref{as:closed}, \eqref{eq:multitask-gen} additionally holds for $\Delta = 0$.
            \end{lemma}

            The proof of \Cref{lem:multitask} leverages the following proposition:
            \begin{proposition}\label{prop:sup-attain}
                Given a task distribution $\hyperdist$ and corresponding function $\tau_{\hyperdist}$ satisfying \Cref{as:closed}, $\forall \taskdist \in \support{\hyperdist}$, $\exists \dataval \in \datavals$ such that $\left| \taskdist\left( \dataval \right) - \bary{\hyperdist} \left( \dataval \right) \right| = \sup_{\dataval \in \datavals}{\tau_{\hyperdist}\left( \taskdist \right)}$.
            \end{proposition}
            \Cref{prop:sup-attain} says that the supremum of $\tau_{\hyperdist}\left( \taskdist \right)$ is always attained, and is a consequence of the fact that the supremum is always attained for closed sets of real numbers (\citealt{rudin_principles_1976}, Theorem 2.28).
            The closedness of $\tau_{\hyperdist}\left( \taskdist \right)$ is enforced by \Cref{as:closed}.

            We proceed with the proof of \Cref{lem:multitask}.
            
            \begin{proof}[Proof of \Cref{lem:multitask}]
                Given a multitask distribution $\hyperdist$ and any $b \in \Rplus$, we have, uniformly $\forall \dataval \in \datavals$, that
                \begin{align}
                    \trueprob{ \left| \taskdist(\dataval) - \bary{\hyperdist}(\dataval) \right| \geq b \wpnorm{\hyperdist} } &\leq \frac{1}{b^2} &\text{(Chebyshev's inequality; \Cref{def:var})} \label{eq:chebyshev} \\
                    \trueprob{ \left| \taskdist(\dataval) - \bary{\hyperdist}(\dataval) \right| \geq \margin } &\leq \frac{\var{\hyperdist}}{\margin^2} &\text{($\margin \equiv b \wpnorm{\hyperdist}$; \Cref{def:multitask})} \label{eq:alpha} \\
                    \trueprob{ \left| \taskdist(\dataval) - \bary{\hyperdist}(\dataval) \right| \geq \margin } &\leq \frac{\sup_{\dataval \in \datavals} \var{\hyperdist}}{\margin^2} & \nonumber \\
                    \trueprob{ \left| \taskdist(\dataval) - \bary{\hyperdist}(\dataval) \right| \geq \margin } &\leq \frac{\maxvar{\hyperdist}}{\margin^2} &\text{(\Cref{def:var})} \label{eq:tau}
                \end{align}
                where $\taskdist$ denotes a task $\taskdist \sim \hyperdist$.

                In line \eqref{eq:chebyshev}, Chebyshev's inequality requires that $\var{\hyperdist}$ is finite.
                This is ensured by the definition of $\var{\hyperdist}$ (see \Cref{rem:var}).

                Line \eqref{eq:alpha} defines $\alpha = f(b) \equiv b \sqrt{\var{\taskdist}}$.
                The positivity of $\sqrt{\var{\hyperdist}} \in (0,1]$ is ensured by the requirement of a multitask distribution (\Cref{def:multitask}).
                Given any $\dataval \in \datavals$, $f$ can evaluate to any $\alpha \in \Rplus$ by suitably choosing $b \in \Rplus$.
                Line \eqref{eq:alpha} therefore holds for any $\alpha \in \Rplus$.
                
                It then follows that, for any given $\overline{\dataval} \in \datavals$,
                \begin{align}
                    \trueprob{ \left| \taskdist(\overline{\dataval}) - \bary{\hyperdist}(\overline{\dataval}) \right| \geq \margin } &\leq \frac{\maxvar{\hyperdist}}{\margin^2} &(\overline{\dataval} \in \datavals) \label{eq:xbar} \\
                    \trueprob{ \left| \taskdist(\overline{\dataval}) - \bary{\hyperdist}(\overline{\dataval}) \right| + \Delta_{\overline{x}} \geq \margin + \Delta_{\overline{x}} } &\leq \frac{\maxvar{\hyperdist}}{\margin^2} &(\Delta_{\overline{x}} \equiv \sup_{\dataval \in \datavals} \left| \taskdist(\dataval) - \bary{\hyperdist}(\dataval) \right| - \left| \taskdist(\overline{\dataval}) - \bary{\hyperdist}(\overline{\dataval}) \right|) \nonumber \\
                    \trueprob{  \sup_{\dataval \in \datavals} \left| \taskdist(\dataval) - \bary{\hyperdist}(\dataval) \right| \geq \margin + \Delta_{\overline{x}} } &\leq \frac{\maxvar{\hyperdist}}{\margin^2} \nonumber \\
                    \trueprob{  \tv{\bary{\hyperdist}}{Q} \geq \margin + \Delta_{\overline{x}} } &\leq \frac{\maxvar{\hyperdist}}{\margin^2}. &\text{(\Cref{def:tv})} \nonumber
                \end{align}
                If $\hyperdist$ satisfies \Cref{as:closed}, \Cref{prop:sup-attain} tells us that the supremum of $\tvset\left( \taskdist \right) \coloneqq \left\{ \left| \taskdist\left( \dataval \right) - \bary{\hyperdist}\left( \dataval \right) \right| : ~ \dataval \in \datavals \right\}$ is attained.
                For any $\overline{\dataval} \in \datavals$ that achieves the supremum, $\Delta_{\overline{\dataval}} = 0$ and the statement \eqref{eq:multitask-gen} holds for $\Delta = 0$.
                
                If $\hyperdist$ does not satisfy \Cref{as:closed}, $\Delta_{\overline{x}}$ can be made arbitrarily small by suitably choosing $\overline{\dataval} \in \datavals$, i.e., the statement \eqref{eq:multitask-gen} holds for any $\Delta \in \mathbb{R}^+$.
            \end{proof}

            We proceed with the proof of \Cref{cor:perfect-no-distshift}.
            \begin{proof}[Proof of \Cref{cor:perfect-no-distshift}]
                Given a source multitask distribution $\hypersourcedist$ satisfying \Cref{as:closed} and any $\alpha \in \Rplus$, we have that
                \begin{align}
                    \trueprob{ \tv{\bary{\hypersourcedist}}{\sourcetaskdist} \geq \margin } &\leq \frac{\maxvar{\hypersourcedist}}{\margin^2} &\text{(\Cref{lem:multitask})} \nonumber \\
                    \trueprob{ \tv{\preddist}{\sourcetaskdist} \geq \margin } &\leq \frac{\maxvar{\hypersourcedist}}{\margin^2} &\text{($\preddist = \bary{\hypersourcedist}$)} \nonumber \\
                    \trueprob{ \tv{\preddist}{\targettaskdist} \geq \margin } &\leq \frac{\maxvar{\hypertargetdist}}{\margin^2} &\text{($\hypertargetdist = \hypersourcedist$)} \nonumber \\
                    \trueprob{ \epistemicerr \geq \margin } &\leq \frac{\maxvar{\hypertargetdist}}{\margin^2} &\text{(\Cref{def:epistemic})} \nonumber
                \end{align}
                where $\sourcetaskdist$ denotes a source task $\sourcetaskdist \sim \hypersourcedist$.
            \end{proof}

        \subsubsection{Imperfect Multitask Learning}
            We here prove \Cref{lem:imperfect-no-distshift}:

            \vspace{2mm}
            \noindent\textbf{\Cref{lem:imperfect-no-distshift}} (Epistemic error depends on task variability, model restrictions, and data scarcity)\textbf{.}\textit{
                Given a predictor $\preddist$, a source multitask distribution $\hypersourcedist$ satisfying \Cref{as:closed}, and $\hypertargetdist = \hypersourcedist$ (\textbf{no distribution shift}),
                $$\trueprob{ \epistemicerr \geq \margin + \bias + \convergence } \leq \frac{\maxvar{\hypertargetdist}}{\margin^2}$$
                for any $\alpha \in \Rplus$.
            }
            \vspace{2mm}

            The proof of \Cref{lem:imperfect-no-distshift} leverages the following lemma:
            \begin{lemma}\label{lem:bc}
                Given a predictor $\preddist$ and a source task distribution $\hypersourcedist$,
                $$\tv{\preddist}{\bary{\hypersourcedist}} \leq \bias + \convergence.$$
            \end{lemma}
            \Cref{lem:bc} follows directly from \Cref{def:bias,def:convergence} and an application of the triangle inequality. 

            We proceed with the proof of \Cref{lem:imperfect-no-distshift}.
            \begin{proof}[Proof of \Cref{lem:imperfect-no-distshift}]
                Given a predictor $\preddist$, a source multitask distribution $\hypersourcedist$ satisfying \Cref{as:closed}, and any $\alpha \in \Rplus$, we have that
                \begin{align}
                    \trueprob{ \tv{\bary{\hypersourcedist}}{\sourcetaskdist} \geq \margin } &\leq \frac{\maxvar{\hypersourcedist}}{\margin^2} &\text{(\Cref{lem:multitask})} \nonumber \\
                    \trueprob{ \tv{\preddist}{\bary{\hypersourcedist}} + \tv{\bary{\hypersourcedist}}{\sourcetask} \geq \margin + \bias + \convergence } &\leq \frac{\maxvar{\hypersourcedist}}{\margin^2} &\text{(\Cref{lem:bc})} \nonumber \\
                    \trueprob{ \tv{\preddist}{\sourcetaskdist} \geq \margin + \bias + \convergence } &\leq \frac{\maxvar{\hypersourcedist}}{\margin^2} &\text{(Triangle inequality)} \nonumber \\
                    \trueprob{ \tv{\preddist}{\targettaskdist} \geq \margin + \bias + \convergence } &\leq \frac{\maxvar{\hypertargetdist}}{\margin^2} &\text{($\hypertargetdist = \hypersourcedist$)} \nonumber \\
                    \trueprob{ \epistemicerr \geq \margin + \bias + \convergence } &\leq \frac{\maxvar{\hypertargetdist}}{\margin^2} &\text{(\Cref{def:epistemic})} \nonumber
                \end{align}
                where $\sourcetaskdist$ denotes a source task $\sourcetaskdist \sim \hypersourcedist$.
            \end{proof}

        \subsubsection{Imperfect Multitask Learning When Distributions Shift}\label{ap:imperfect-distshift}
            We here prove \Cref{thm:imperfect-distshift}:

            \vspace{2mm}
            \noindent\textbf{\Cref{thm:imperfect-distshift}} (Epistemic error depends on task variability, model restrictions, data scarcity, and distribution shift)\textbf{.}\textit{
                Given a predictor $\preddist$, a source task distribution $\hypersourcedist$, and a target multitask distribution $\hypertargetdist$ satisfying \Cref{as:closed},
                $$\trueprob{ \epistemicerr \geq \margin + \bias + \convergence + \distshift } \leq \frac{\maxvar{\hypertargetdist}}{\margin^2}$$
                for any $\alpha \in \Rplus$.
            }
        
            \begin{proof}[Proof of \Cref{thm:imperfect-distshift}]
                Given a predictor $\preddist$, a source task distribution $\hypersourcedist$, a target multitask distribution $\hypertargetdist$ satisfying \Cref{as:closed}, and any $\alpha \in \Rplus$, we have that
                \begin{align}
                    \trueprob{ \tv{\bary{\hypertargetdist}}{\targettaskdist} \geq \margin } &\leq \frac{\maxvar{\hypertargetdist}}{\margin^2} &\text{(\Cref{lem:multitask})} \nonumber \\
                    \trueprob{ \tv{\bary{\hypersourcedist}}{\bary{\hypertargetdist}} + \tv{\bary{\hypertargetdist}}{\targettaskdist} \geq \margin + \distshift } &\leq \frac{\maxvar{\hypertargetdist}}{\margin^2} &\text{(\Cref{def:distshift})} \nonumber \\
                    \trueprob{ \tv{\bary{\hypersourcedist}}{\targettaskdist} \geq \margin + \distshift } &\leq \frac{\maxvar{\hypertargetdist}}{\margin^2} &\text{(Triangle inequality)} \nonumber \\
                    \trueprob{ \tv{\preddist}{\bary{\hypersourcedist}} + \tv{\bary{\hypersourcedist}}{\targettaskdist} \geq \margin + \tv{\preddist}{\bary{\hypersourcedist}} + \distshift } &\leq \frac{\maxvar{\hypertargetdist}}{\margin^2} \nonumber \\
                    \trueprob{ \tv{\preddist}{\bary{\hypersourcedist}} + \tv{\bary{\hypersourcedist}}{\targettaskdist} \geq \margin + \bias + \convergence + \distshift } &\leq \frac{\maxvar{\hypertargetdist}}{\margin^2} &\text{(\Cref{lem:bc})} \nonumber \\
                    \trueprob{ \tv{\preddist}{\targettaskdist} \geq \margin + \bias + \convergence + \distshift } &\leq \frac{\maxvar{\hypertargetdist}}{\margin^2} &\text{(Triangle inequality)} \nonumber \\
                    \trueprob{ \epistemicerr \geq \margin + \bias + \convergence + \distshift } &\leq \frac{\maxvar{\hypertargetdist}}{\margin^2} &\text{(\Cref{def:epistemic})} \nonumber
                \end{align}
                where $\targettaskdist$ denotes a target task $\targettaskdist \sim \hypertargetdist$.
            \end{proof}

            \paragraph{Violation of \Cref{as:closed}}
                The following corollary to \Cref{thm:imperfect-distshift} provides a general epistemic error bound in case \Cref{as:closed} does not hold:
                \begin{corollary}[Modification to \Cref{thm:imperfect-distshift} if \Cref{as:closed} is violated]\label{cor:not-closed}
                    Given a predictor $\preddist$, a source task distribution $\hypersourcedist$, and a target multitask distribution $\hypertargetdist$,
                    $$\trueprob{ \epistemicerr \geq \margin + \bias + \convergence + \distshift + \Delta } \leq \frac{\maxvar{\hypertargetdist}}{\margin^2}$$
                    for any $\margin \in \Rplus$ and any $\Delta \in \mathbb{R}^+$ (i.e., for arbitrarily small $\Delta$).
                \end{corollary}

                \Cref{cor:not-closed} tells us that violation of \Cref{as:closed} results in an arbitrarily small inflation to the epistemic error margin and does not affect the conclusions drawn from \Cref{thm:imperfect-distshift} regarding the contributions of each source to epistemic error.
                
                For completeness, we here include the proof of \Cref{cor:not-closed}.
                The proof follows the same steps as the proof of \Cref{thm:imperfect-distshift}, incorporating the inflation to the epistemic error margin in each step.
                \begin{proof}[Proof of \Cref{cor:not-closed}]
                Given a predictor $\preddist$, a source task distribution $\hypersourcedist$, a target multitask distribution $\hypertargetdist$, any $\margin \in \Rplus$, and any $\Delta \in \mathbb{R}^+$, we have that
                \begin{align}
                    \trueprob{ \tv{\bary{\hypertargetdist}}{\targettaskdist} \geq \margin + \Delta } &\leq \frac{\maxvar{\hypertargetdist}}{\margin^2} &\text{(\Cref{lem:multitask})} \nonumber \\
                    \trueprob{ \tv{\bary{\hypersourcedist}}{\bary{\hypertargetdist}} + \tv{\bary{\hypertargetdist}}{\targettaskdist} \geq \margin + \distshift + \Delta } &\leq \frac{\maxvar{\hypertargetdist}}{\margin^2} &\text{(\Cref{def:distshift})} \nonumber \\
                    \trueprob{ \tv{\bary{\hypersourcedist}}{\targettaskdist} \geq \margin + \distshift + \Delta } &\leq \frac{\maxvar{\hypertargetdist}}{\margin^2} &\text{(Triangle inequality)} \nonumber \\
                    \trueprob{ \tv{\preddist}{\bary{\hypersourcedist}} + \tv{\bary{\hypersourcedist}}{\targettaskdist} \geq \margin + \tv{\preddist}{\bary{\hypersourcedist}} + \distshift + \Delta } &\leq \frac{\maxvar{\hypertargetdist}}{\margin^2} \nonumber \\
                    \trueprob{ \tv{\preddist}{\bary{\hypersourcedist}} + \tv{\bary{\hypersourcedist}}{\targettaskdist} \geq \margin + \bias + \convergence + \distshift + \Delta } &\leq \frac{\maxvar{\hypertargetdist}}{\margin^2} &\text{(\Cref{lem:bc})} \nonumber \\
                    \trueprob{ \tv{\preddist}{\targettaskdist} \geq \margin + \bias + \convergence + \distshift + \Delta } &\leq \frac{\maxvar{\hypertargetdist}}{\margin^2} &\text{(Triangle inequality)} \nonumber \\
                    \trueprob{ \epistemicerr \geq \margin + \bias + \convergence + \distshift + \Delta } &\leq \frac{\maxvar{\hypertargetdist}}{\margin^2} &\text{(\Cref{def:epistemic})} \nonumber
                \end{align}
                where $\targettaskdist$ denotes a target task $\targettaskdist \sim \hypertargetdist$.
            \end{proof}

            \paragraph{Distribution shift from the learner's perspective}
                We here prove \Cref{cor:excess-bias}:

                \vspace{2mm}
                \noindent\textbf{\Cref{cor:excess-bias}} (Epistemic error bound from the learner's perspective on distribution shift)\textbf{.}\textit{
                    Given a predictor $\preddist$, a source task distribution $\hypersourcedist$, and a target multitask distribution $\hypertargetdist$ satisfying \Cref{as:closed},
                    $$\trueprob{ \epistemicerr \geq \margin + \bias + \convergence + \distshiftLearner } \leq \frac{\maxvar{\hypertargetdist}}{\margin^2}$$
                    for any $\margin \in \Rplus$.
                }

                \vspace{2mm}
                The proof of \Cref{cor:excess-bias} leverages the following lemma:
                \begin{lemma}\label{lem:d-pred-baryT}
                    Given a predictor $\preddist$, a target task distribution $\hypersourcedist$, and a target task distribution $\hypertargetdist$,
                    $$\tv{\preddist}{\bary{\hypertargetdist}} \leq \bias + \convergence + \distshiftLearner.$$
                \end{lemma}
                \Cref{lem:d-pred-baryT} follows directly from \Cref{def:convergence,def:distshift-model} and an application of the triangle inequality.

                We proceed with the proof of \Cref{cor:excess-bias}:
                \begin{proof}[Proof of \Cref{cor:excess-bias}]
                    Given a predictor $\preddist$, a target task distribution $\hypersourcedist$, a target multitask distribution $\hypertargetdist$ satisfying \Cref{as:closed}, and any $\margin \in \Rplus$, we have that
                    \begin{align}
                        \trueprob{ \tv{\bary{\hypertargetdist}}{\targettaskdist} \geq \margin } &\leq \frac{\maxvar{\hypertargetdist}}{\margin^2} &\text{(\Cref{lem:multitask})} \nonumber \\
                        \trueprob{ \tv{\preddist}{\bary{\hypertargetdist}} + \tv{\bary{\hypertargetdist}}{\targettask} \geq \margin + \bias + \convergence + \distshiftLearner } &\leq \frac{\maxvar{\hypertargetdist}}{\margin^2} &\text{(\Cref{lem:d-pred-baryT})} \nonumber \\
                        \trueprob{ \tv{\preddist}{\targettaskdist} \geq \margin + \bias + \convergence + \distshiftLearner } &\leq \frac{\maxvar{\hypertargetdist}}{\margin^2} &\text{(Triangle inequality)} \nonumber
                    \end{align}
                    where $\targettaskdist$ denotes a target task $\targettaskdist \sim \hypertargetdist$.
                \end{proof}

                \Cref{prop:d-compare} expresses the claim made in \Cref{sec:indist} that \Cref{cor:excess-bias} provides a tighter epistemic error margin than \Cref{thm:imperfect-distshift}.
                \begin{proposition}[Distribution shift is less extreme from the learner's perspective]\label{prop:d-compare}
                    Given a predictor $\preddist$, a source task distribution $\hypersourcedist$, and a target task distribution $\hypertargetdist$,
                    $$\distshift \geq \distshiftLearner.$$
                \end{proposition}

                \begin{proof}[Proof of \Cref{prop:d-compare}]
                    Given a predictor $\preddist$, a source task distribution $\hypersourcedist$, and a target task distribution $\hypertargetdist$, we have that
                    \begin{align}
                        \tv{\bestfitdist{S}}{\bary{\hypersourcedist}} + \tv{\bary{\hypersourcedist}}{\bary{\hypertargetdist}} &\geq \tv{\bestfitdist{S}}{\bary{\hypertargetdist}} &\text{(Triangle inequality)} \nonumber \\
                        \tv{\bestfitdist{S}}{\bary{\hypersourcedist}} + \tv{\bary{\hypersourcedist}}{\bary{\hypertargetdist}} &\geq \tv{\bestfitdist{S}}{\bary{\hypersourcedist}} + \tv{\bestfitdist{S}}{\bary{\hypertargetdist}} - \tv{\bestfitdist{S}}{\bary{\hypersourcedist}} \nonumber \\
                        \bias + \distshift &\geq \bias + \distshiftLearner &\text{(\Cref{def:bias,def:distshift,def:distshift-model})} \nonumber \\
                        \distshift &\geq \distshiftLearner. \nonumber
                    \end{align}
                \end{proof}

    \subsection{Special Cases}\label{ap:cases}
        \subsubsection{Bayesian Transfer Learning}
            We here prove \Cref{cor:bayesian}:

            \vspace{2mm}
            \noindent \textbf{\Cref{cor:bayesian}} (Bayesian transfer learning)\textbf{.}\textit{
                Given a posterior $\posterior$, a source task distribution $\hypersourcedist$, and a target multitask distribution $\hypertargetdist$ satisfying \Cref{as:closed},
                $$\trueprob{ \epistemicerr \geq \margin + \bias + \CTheta + \distshift } \leq \frac{\maxvar{\hypertargetdist}}{\margin^2}$$
                for any $\margin \in \Rplus$ and where $\CTheta \coloneqq \tv{\posterior}{\bestfitthetadist}$.
            }

            \vspace{2mm}
            We first prove the following lemma, which is leveraged in the proof of \Cref{cor:bayesian}:
            \begin{lemma}\label{lem:convergence}
                Given a posterior $\posterior$ and a source task distribution $\hypersourcedist$,
                $$\convergence \leq \CTheta$$ for $\CTheta \coloneqq \tv{\posterior}{\bestfitthetadist}$.
            \end{lemma}
            
            \begin{proof}[Proof of \Cref{lem:convergence}]
                Given a posterior $\posterior$ and a source task distribution $\hypersourcedist$, we have that
                \begin{align}
                    \convergence &= \tv{\preddist}{\bestfitdist{S}} &\text{(\Cref{def:convergence})} \nonumber \\
                    &= \sup_{\dataval \in \datavals}{\left| \preddist(\dataval) - \bestfitdist{}(\dataval) \right|} &\text{(\Cref{def:tv})} \nonumber \\
                    &= \sup_{\dataval \in \datavals}{\left| \int_{\thetavals} \preddist(\dataval \vert \thetaval) ~ p_1(\thetaval) ~ d\thetaval - \int_{\thetavals} \preddist(\dataval \vert \thetaval) ~ p_{\star}(\thetaval) ~ d\thetaval \right|} \nonumber \\
                    &= \sup_{\dataval \in \datavals}{\left| \int_{\thetavals} \left( p_1(\thetaval) - p_{\star}(\thetaval) \right) \preddist(\dataval \vert \thetaval) ~ d\thetaval \right|} \nonumber \\
                    &\leq \left| \int_{\thetavals} \left( p_1(\thetaval) - p_{\star}(\thetaval) \right) ~ d\thetaval \right| &\text{($\forall \dataval \in \datavals, ~ 0 \leq \preddist(\dataval \vert \thetaval) \leq 1$)} \nonumber \\
                    &\leq \left| \int_{\thetavals^+} \left( p_1(\thetaval) - p_{\star}(\thetaval) \right) d\thetaval \right|
                    &\text{($\thetavals^+ \equiv \{ \thetaval \in \thetavals : p_1(\thetaval) \geq p_{\star}(\thetaval) \}$)} \nonumber \\
                    &= \left| \int_{\thetavals^+} p_1(\thetaval) ~ d\thetaval - \int_{\thetavals^+} p_{\star}(\thetaval) ~ d\thetaval \right| \nonumber \\
                    &= \left| \posterior(\thetavals^+) - \bestfitthetadist(\thetavals^+) \right| \nonumber \\
                    &\leq \tv{\posterior}{\bestfitthetadist} \nonumber
                \end{align}
                where $p_{\star}$ is the p.d.f. of the best approximate parameter distribution $\bestfitthetadist$.
            \end{proof}

            Direct substitution of the result from \Cref{lem:convergence} into the epistemic error margin in the statement of \Cref{thm:imperfect-distshift} completes the proof of \Cref{cor:bayesian}.

        \subsubsection{Total Variation Neighborhoods}\label{ap:tv}
            We here prove \Cref{cor:eps}, which depends on the following definition of the \textit{diameter} of a task distribution:
            \begin{definition}[Diameter of $\hyperdist$]\label{def:diam}
                Given a task distribution $\hyperdist$, the diameter of $\hyperdist$ is
                $$\mathrm{diam}\left( \hyperdist \right) \coloneqq \sup_{\taskdist, \repl{\taskdist} \in \support{\hyperdist} \times \support{\hyperdist}} \tv{\taskdist}{\repl{\taskdist}}.$$
            \end{definition}

            We also require the following assumption:
            \begin{assumption}\label{as:closed-var}
                A task distribution $\hyperdist$ satisfies this assumption if $\upsilon_{\hyperdist} \coloneqq \left\{ \var{\hyperdist} : ~ \dataval \in \datavals \right\}$ is a closed subset of the unit interval.
            \end{assumption}

            \noindent \textbf{\Cref{cor:eps}} (Total variation neighborhoods)\textbf{.}\textit{
                Given a predictor $\preddist$, a target multitask distribution $\hypertargetdist$ satisfying \Cref{as:closed,as:closed-var} with p.d.f. $\hypertargetpdf ~ : ~ \distvals^T \mapsto \left[\minp^T,\maxp^T\right]$, a source task distribution $\hypersourcedist$ with p.d.f. $\hypersourcepdf ~ : ~ \distvals^T \mapsto \left[\minp^S,\maxp^S\right]$, $\minp^S > 0$, and an $\eps \in (0,1)$ satisfying \Cref{as:eps},
                $$\trueprob{\epistemicerr \geq \margin + \bias + \convergence + \distshift} \leq \frac{\beta}{\margin^2} \left( {\mathbb{V}_{\overline{\dataval}} \left[ \hypersourcedist \right] + \mathrm{vol}\left( \hypertargetdist \right)} \right)$$
                for any $\margin \in \Rplus$ and where $\beta = \maxp^T/\minp^S$, $\overline{\dataval} \in \datavals$ such that $\mathbb{V}_{\overline{\dataval}} \left[ \hypertargetdist \right] = \maxvar{\hypertargetdist}$, and $\mathrm{vol}\left( \hypertargetdist \right) \coloneqq \left( \mathrm{diam}\left( \hypersourcedist \right) + \eps \right)^2$.
            }

            We first prove the following lemmas, which are leveraged in the proof of \Cref{cor:eps}:
            \begin{lemma}\label{lem:V}
                Given a target task distribution $\hypertargetdist$ satisfying \Cref{as:closed-var} with p.d.f. $\hypertargetpdf ~ : ~ \distvals^T \mapsto \left[\minp^T,\maxp^T\right]$ and a source task distribution $\hypersourcedist$ with p.d.f. $\hypersourcepdf ~ : ~ \distvals^T \mapsto \left[\minp^S,\maxp^S\right]$, $\minp^S > 0$,
                $$\maxvar{\hypertargetdist} \leq \beta \left( \mathbb{V}_{\overline{\dataval}} \left[ \hypersourcedist \right] + \distshift^2 \right)$$
                for $\beta = \maxp^T/\minp^S$ and $\overline{\dataval} \in \datavals$ such that $\mathbb{V}_{\overline{\dataval}} \left[ \hypertargetdist \right] = \maxvar{\hypertargetdist}$.
            \end{lemma}

            The proof of \Cref{lem:V} leverages the following proposition:
            \begin{proposition}\label{prop:maxvar-attain}
                Given a task distribution $\hyperdist$ and corresponding set $\upsilon_{\hyperdist}$ satisfying \Cref{as:closed-var}, $\exists \dataval \in \datavals$ such that $\var{\hyperdist} = \sup_{\dataval \in \datavals} \upsilon_{\hyperdist}$.
            \end{proposition}
            \Cref{prop:maxvar-attain} says that the supremum of $\upsilon_{\hyperdist}$ is always attained.
            Like \Cref{prop:sup-attain}, it is a consequence of the fact that the supremum is always attained for closed sets of real numbers (\citet{rudin_principles_1976}, Theorem 2.28).

            We proceed with the proof of \Cref{lem:V}.
            \begin{proof}[Proof of \Cref{lem:V}]
                Given a target task distribution $\hypertargetdist$ with p.d.f. $\hypertargetpdf ~ : ~ \distvals^T \mapsto \left[\minp^T,\maxp^T\right]$ and a source task distribution $\hypersourcedist$ with p.d.f. $\hypersourcepdf ~ : ~ \distvals^T \mapsto \left[\minp^S,\maxp^S\right]$, $\minp^S > 0$, we have, uniformly $\forall \dataval \in \datavals$, that
                \begin{align}
                    \var{\hypertargetdist} &= \int_{\distvals^T} \left( \taskdist(\dataval) - \bary{\hypertargetdist}(\dataval) \right)^2 ~ \hypertargetpdf(Q) ~ d Q &\text{(\Cref{def:var})} \nonumber \\
                    &= \int_{\distvals^T} \left( \left( \taskdist(\dataval) - \bary{\hypersourcedist}(\dataval) \right) + \left( \bary{\hypersourcedist}(\dataval) - \bary{\hypertargetdist}(\dataval) \right) \right)^2 ~ \left( \hypersourcepdf(Q) \frac{\hypertargetpdf(Q)}{\hypersourcepdf(Q)} \right) d Q \nonumber \\
                    &\leq \frac{\maxp^T}{\minp^S} \int_{\distvals^T} \left( \left( \taskdist(\dataval) - \bary{\hypersourcedist}(\dataval) \right)
                    + \left( \bary{\hypersourcedist}(\dataval) - \bary{\hypertargetdist}(\dataval) \right) \right)^2 \hypersourcepdf(Q) ~ dQ \nonumber \\
                    &= \frac{\maxp^T}{\minp^S} \int_{\distvals^T} \left( \left( \taskdist(\dataval) - \bary{\hypersourcedist}(\dataval) \right)^2
                    + 2 \left( \taskdist(\dataval) - \bary{\hypersourcedist}(\dataval) \right)
                    \left( \bary{\hypersourcedist}(\dataval) - \bary{\hypertargetdist}(\dataval) \right) \right. \nonumber \\
                    &\qquad\qquad\qquad\qquad\qquad
                    \left. + \left( \bary{\hypersourcedist}(\dataval) - \bary{\hypertargetdist}(\dataval) \right)^2 \right) \hypersourcepdf(Q) ~ dQ \nonumber \\
                    &= \frac{\maxp^T}{\minp^S} \Bigg( \var{\hypersourcedist} + 2 \left( \bary{\hypersourcedist}(\dataval) - \bary{\hypertargetdist}(\dataval) \right) \int_{\distvals^T} \left( \taskdist(\dataval) - \bary{\hypersourcedist}(\dataval) \right) \hypersourcepdf(Q) ~ dQ &\text{(\Cref{def:var})} \nonumber \\
                    &\qquad\qquad\qquad\qquad\qquad + \left( \bary{\hypersourcedist}(\dataval) - \bary{\hypertargetdist}(\dataval) \right)^2 \int_{\distvals^T} \hypersourcepdf(Q) ~ dQ \Bigg) \nonumber \\
                    &= \frac{\maxp^T}{\minp^S} \left( \var{\hypersourcedist} + \left( \bary{\hypersourcedist}(\dataval) - \bary{\hypertargetdist}(\dataval) \right)^2 \right) &\text{(\Cref{def:bary}; $\int_{\distvals^T}\hypersourcepdf(Q) ~ dQ = 1$)} \nonumber \\
                    &\leq \frac{\maxp^T}{\minp^S} \left( \var{\hypersourcedist} + \tv{\bary{\hypersourcedist}}{\bary{\hypertargetdist}}^2 \right) &\text{(\Cref{def:tv})} \nonumber \\
                    &= \frac{\maxp^T}{\minp^S} \left( \var{\hypersourcedist} + \distshift^2 \right). &\text{(\Cref{def:distshift})} \nonumber 
                \end{align}
                Now define $\overline{\datavals} \equiv \left\{ \dataval \in \datavals \mid \var{\hypertargetdist} = \maxvar{\hypertargetdist} \right\}$.
                Recalling \Cref{def:var}, $\overline{\datavals}$ contains the values of $\dataval$ at which the supremum of the variance of $\hypertargetdist$ is attained.
                If $\hypertargetdist$ satisfies \Cref{as:closed-var}, \Cref{prop:maxvar-attain} ensures that $\overline{\datavals} \neq \emptyset$, i.e., that the supremum is attained by at least one $\dataval \in \datavals$.
                It then follows that, for any given $\overline{\dataval} \in \overline{\datavals} \subseteq \datavals$,
                \begin{align}
                    \mathbb{V}_{\overline{\dataval}} \left[ \hypertargetdist \right] &\leq \frac{\maxp^T}{\minp^S} \left( \mathbb{V}_{\overline{\dataval}} \left[ \hypersourcedist \right] + \distshift^2 \right) & \nonumber \\
                    \maxvar{\hypertargetdist} &\leq \frac{\maxp^T}{\minp^S} \left( \mathbb{V}_{\overline{\dataval}} \left[ \hypersourcedist \right] + \distshift^2 \right). & \nonumber
                \end{align}
            \end{proof}

            \begin{lemma}\label{lem:diam-bary}
                Given a task distribution $\hyperdist$,
                $\forall \taskdist \in \support{\hyperdist}$,
                $$\tv{\taskdist}{\bary{\hyperdist}} \leq \mathrm{diam}\left( \hyperdist \right).$$
            \end{lemma}
            \begin{proof}[Proof of \Cref{lem:diam-bary}]
                Given a task distribution $\hyperdist$, we have, uniformly $\forall \taskdist \in \support{\hyperdist}$, that
                \begin{align}
                    \tv{\taskdist}{\bary{\hyperdist}} &= \sup_{\dataval \in \datavals} \left| \int_{\distvals} \left( \taskdist(\dataval) - \repl{\taskdist}(\dataval) \right) \hyperdistpdf(\repl{\taskdist}) ~ d\repl{\taskdist} \right| &\text{(\Cref{def:bary,def:tv})} \nonumber \\
                    &\leq \int_{\distvals} \left( \sup_{\dataval \in \datavals} \left| \taskdist(\dataval) - \repl{\taskdist}(\dataval) \right| \right) \hyperdistpdf(\repl{\taskdist}) ~ d\repl{\taskdist} & \nonumber \\
                    &= \int_{\distvals} \tv{\taskdist}{\repl{\taskdist}} \hyperdistpdf(\repl{\taskdist}) ~ d\repl{\taskdist} & \text{(\Cref{def:tv})} \nonumber \\
                    &\leq \sup_{\repl{\taskdist} \in \support{\hyperdist}}{\tv{\taskdist}{\repl{\taskdist}}} & \nonumber \\
                    &\leq \sup_{\taskdist, \repl{\taskdist} \in \support{\hyperdist} \times \support{\hyperdist}}{\tv{\taskdist}{\repl{\taskdist}}} & \nonumber \\
                    &= \mathrm{diam}\left( \hyperdist \right). &\text{(\Cref{def:diam})} \nonumber
                \end{align}
            \end{proof}

            \begin{lemma}\label{lem:eps-D}
                Given a target task distribution $\hypertargetdist$ and a source task distribution $\hypersourcedist$ with shared support $\distvals^T$ and an $\eps \in (0,1)$ satisfying \Cref{as:eps},
                $$\distshift \leq \mathrm{diam}\left( \hypersourcedist \right) +  \eps.$$
            \end{lemma}
            \begin{proof}[Proof of \Cref{lem:eps-D}]
                Given a target task distribution $\hypertargetdist$ and a source task distribution $\hypersourcedist$ with shared support $\distvals^T$ and an $\eps \in (0,1)$ satisfying \Cref{as:eps}, we have, uniformly $\forall \taskdist^t \in \distvals^T$,
                \begin{align}
                    \tv{\taskdist^t}{\bary{\hypersourcedist}} &\leq \tv{\taskdist^s_{\taskdist^t}}{\bary{\hypersourcedist}} + \tv{\taskdist^t}{\taskdist^s_{\taskdist^t}} &\text{($\taskdist^s_{\taskdist^t} \in \{ \taskdist^s \in \distvals^T ~ : ~ \tv{\taskdist^t}{\taskdist^s_{\taskdist^t}} \leq \epsilon \}$)} \label{eq:Qt-barQS} \\
                    &\leq \mathrm{diam}\left( \hypersourcedist \right) + \eps. &\text{(\Cref{lem:diam-bary,as:eps})} \label{eq:lem-D}
                \end{align}
                The non-emptiness of the set defined in line \eqref{eq:Qt-barQS} is ensured by \Cref{as:eps}.

                It then follows that,
                \begin{align}
                    \distshift &= \tv{\bary{\hypersourcedist}}{\bary{\hypertargetdist}} &\text{(\Cref{def:distshift})} \nonumber \\
                    &= \sup_{\dataval \in \datavals} \left| \int_{\distvals^T}  \left( \bary{\hypersourcedist}(\dataval) - \taskdist^t(\dataval) \right) ~ \hyperdistpdf^T(\taskdist^t) ~ d\taskdist^t \right| &\text{(\Cref{def:bary,def:tv})} \nonumber \\
                    &\leq \int_{\distvals^T} \left( \sup_{\dataval \in \datavals} \left| \bary{\hypersourcedist}(\dataval) - \taskdist^t(\dataval) \right| \right) \hyperdistpdf^T(\taskdist^t) ~ d\taskdist^t & \\
                    &= \int_{\distvals^T} \tv{\taskdist^t}{\bary{\hypersourcedist}} ~ \hyperdistpdf^T(\taskdist^t) ~ d\taskdist^t &\text{(\Cref{def:tv})} \nonumber \\
                    &\leq \int_{\distvals^T} \left( \mathrm{diam}\left( \hypersourcedist \right) + \eps \right) ~ \hyperdistpdf^T(\taskdist^t) ~ d\taskdist^t &\text{(Line \eqref{eq:lem-D})} \nonumber \\
                    &= \mathrm{diam}\left( \hypersourcedist \right) + \eps. & \nonumber
                \end{align}
            \end{proof}

            \begin{lemma}\label{lem:eps-V}
                Given a target multitask distribution $\hypertargetdist$ satisfying \Cref{as:closed-var} with p.d.f. $\hypertargetpdf ~ : ~ \distvals^T \mapsto \left[\minp^T,\maxp^T\right]$, a source task distribution $\hypersourcedist$ with p.d.f. $\hypersourcepdf ~ : ~ \distvals^T \mapsto \left[\minp^S,\maxp^S\right]$, $\minp^S > 0$, and an $\eps \in (0,1)$ satisfying \Cref{as:eps},
                $$\maxvar{\hypertargetdist} \leq \beta \left( \mathbb{V}_{\overline{\dataval}} \left[ \hypersourcedist \right] + \mathrm{vol}\left( \hypertargetdist \right) \right)$$
                for $\beta = \maxp^T/\minp^S$, $\overline{\dataval} \in \datavals$ such that $\mathbb{V}_{\overline{\dataval}} \left[ \hypertargetdist \right] = \maxvar{\hypertargetdist}$, and $\mathrm{vol}\left( \hypertargetdist \right) \coloneqq \left( \mathrm{diam}\left( \hypersourcedist \right) + \eps \right)^2$.
            \end{lemma}
            \Cref{lem:eps-V} follows from direct substitution of the result in \Cref{lem:eps-D} into the statement in \Cref{lem:V}.
            Direct substitution of the result from \Cref{lem:eps-V} into the probability the learner experiences the epistemic error margin in the statement of \Cref{thm:imperfect-distshift} completes the proof of \Cref{cor:eps}.

        \subsubsection{Alternative Total Variation Neighborhood Model}
            We here provide an epistemic error bound under an alternative total variation neighborhood model, which constrains the distance between the source and target task distributions.
        
            \begin{assumption}\label{as:eps-dist}
                There exists an $\eps \in (0,1)$ such that $\tv{\hypersourcedist}{\hypertargetdist} \leq \eps$.
            \end{assumption}

            \begin{corollary}\label{cor:eps-dist}
                Given a predictor $\preddist$, a target multitask distribution $\hypertargetdist$ satisfying \Cref{as:closed,as:closed-var} with p.d.f. $\hypertargetpdf ~ : ~ \distvals^T \mapsto \left[\minp^T,\maxp^T\right]$, a source task distribution $\hypersourcedist$ with p.d.f. $\hypersourcepdf ~ : ~ \distvals^T \mapsto \left[\minp^S,\maxp^S\right]$, $\minp^S > 0$, and an $\eps \in (0,1)$ satisfying \Cref{as:eps-dist},
                $$\trueprob{\epistemicerr \geq \margin + \bias + \convergence + \distshift} \leq \frac{\beta}{ \margin^2}\left( \mathbb{V}_{\overline{\dataval}} \left[ \hypersourcedist \right] + \eps^2 \right)$$
                for any $\margin \in \Rplus$ and where $\beta = \maxp^T / \minp^S$ and $\overline{\dataval} \in \datavals$ such that $\mathbb{V}_{\overline{\dataval}} \left[ \hypertargetdist \right] = \maxvar{\hypertargetdist}$.
            \end{corollary}

            We first prove the following lemmas, which are leveraged in the proof of \Cref{cor:eps-dist}:
            \begin{lemma}\label{lem:eps-dist-d}
                Given a source task distribution $\hypersourcedist$, a target task distribution $\hypertargetdist$, and an $\eps$ satisfying \Cref{as:eps-dist},
                $$\distshift \leq \eps.$$
            \end{lemma}
            \begin{proof}[Proof of \Cref{lem:eps-dist-d}]
                Given a source task distribution $\hypersourcedist$, a target task distribution $\hypertargetdist$, and an $\eps$ satisfying \Cref{as:eps-dist}, we have that
                \begin{align}
                    \distshift &= \tv{\bary{\hypersourcedist}}{\bary{\hypertargetdist}} &\text{(\Cref{def:distshift})} \nonumber \\
                    &= \sup_{\dataval \in \datavals}{\left| \int_{\distvals} \taskdist(\dataval) ~ \hypersourcepdf(\task) ~ d\task - \int_{\distvals} \taskdist(\dataval) ~ \hypertargetpdf(\task) ~ d\task \right|} &\text{(\Cref{def:bary,def:tv})} \nonumber \\
                    &= \sup_{\dataval \in \datavals}{\left| \int_{\distvals} \left( \taskdist(\dataval) ~ \hypersourcepdf(\task) - \taskdist(\dataval) ~ \hypertargetpdf(\task) \right) d\task \right|} \nonumber \\
                    &= \sup_{\dataval \in \datavals}{\left| \int_{\distvals} \taskdist(\dataval) \left( \hypersourcepdf(\task) - \hypertargetpdf(\task) \right) ~ d\task \right|} \nonumber \\
                    &\leq \left| \int_{\distvals} \left( \hypersourcepdf(\task) - \hypertargetpdf(\task) \right) d\task \right| &\text{($\forall \taskdist \in \distvals, \dataval \in \datavals, ~ 0 \leq \taskdist(\dataval) \leq 1$)} \nonumber \\
                    &\leq \left| \int_{\distvals^+} \left( \hypersourcepdf(\task) - \hypertargetpdf(\task) \right) d\task \right| &\text{($\distvals^+ \equiv \{ \task \in \distvals : \hypersourcepdf(\task) \geq \hypertargetpdf(\task) \}$)} \nonumber \\
                    &= \left| \int_{\distvals^+} \hypersourcepdf(\task) ~ d\task - \int_{\distvals^+} \hypertargetpdf(\task) ~ d\task \right| \nonumber \\
                    &= \left| \hypersourcedist(\distvals^+) - \hypertargetdist(\distvals^+) \right| \nonumber \\
                    &\leq \tv{\hypersourcedist}{\hypertargetdist} &\text{(\Cref{def:tv})} \nonumber \\
                    &\leq \eps. ~ &\text{(\Cref{as:eps-dist})} \nonumber
                \end{align}
            \end{proof}

            \begin{lemma}\label{lem:eps-dist-V}
                Given a target task distribution $\hypertargetdist$ satisfying \Cref{as:closed-var} with p.d.f. $\hypertargetpdf ~ : ~ \distvals^T \mapsto \left[\minp^T,\maxp^T\right]$, a source task distribution $\hypersourcedist$ with p.d.f. $\hypersourcepdf ~ : ~ \distvals^T \mapsto \left[\minp^S,\maxp^S\right]$, $\minp^S > 0$, and an $\eps \in (0,1)$ satisfying \Cref{as:eps-dist},
                $$\maxvar{\hypertargetdist} \leq \beta \left( \mathbb{V}_{\overline{\dataval}} \left[ \hypersourcedist \right] + \eps^2 \right)$$
                for $\beta = \maxp^T / \minp^S$ and $\overline{\dataval} \in \datavals$ such that $\mathbb{V}_{\overline{\dataval}} \left[ \hypertargetdist \right] = \maxvar{\hypertargetdist}$.
            \end{lemma}
            \Cref{lem:eps-dist-V} follows from direct substitution of the result in \Cref{lem:eps-dist-d} into the statement in \Cref{lem:V}.
            Direct substitution of the result from \Cref{lem:eps-dist-V} into the probability the learner experiences the epistemic error margin in the statement of \Cref{thm:imperfect-distshift} completes the proof of \Cref{cor:eps-dist}.

        \subsubsection{Bayesian Transfer Learning in Total Variation Neighborhoods}\label{ap:blt-tv-theory}
            We here provide epistemic error bounds for the setting where the constraints of both the Bayesian transfer learning setting and each of the two total variation neighborhood models are met.

            \Cref{cor:bayesian-eps} provides an epistemic error bound when \Cref{as:eps} is met.
            \begin{corollary}\label{cor:bayesian-eps}
                Given a posterior $\posterior$, a target multitask distribution $\hypertargetdist$ satisfying \Cref{as:closed,as:closed-var} with p.d.f. $\hypertargetpdf ~ : ~ \distvals^T \mapsto \left[\minp^T,\maxp^T\right]$, a source task distribution $\hypersourcedist$ with p.d.f. $\hypersourcepdf ~ : ~ \distvals^T \mapsto \left[\minp^S,\maxp^S\right]$, $\minp^S > 0$, and an $\eps \in (0,1)$ satisfying \Cref{as:eps},
                $$\trueprob{\epistemicerr \geq \margin + \bias + \CTheta + \distshift} \leq \frac{\beta}{\margin^2} \left( \mathbb{V}_{\overline{\dataval}} \left[ \hypersourcedist \right] + \mathrm{vol}\left( \hypertargetdist \right) \right)$$
                for any $\margin \in \Rplus$ and where $\CTheta \coloneqq \tv{\posterior}{\bestfitthetadist}$, $\beta = \maxp^T / \minp^S$, $\overline{\dataval} \in \datavals$ such that $\mathbb{V}_{\overline{\dataval}} \left[ \hypertargetdist \right] = \maxvar{\hypertargetdist}$, and $\mathrm{vol}\left( \hypertargetdist \right) \coloneqq \left( \mathrm{diam}\left( \hypersourcedist \right) + \epsilon \right)^2$.
            \end{corollary}
            \Cref{cor:bayesian-eps} follows directly from substitution of the result in \Cref{cor:eps} into the statement in \Cref{cor:bayesian}.

            \Cref{cor:bayesian-eps-dist} provides an epistemic error bound when \Cref{as:eps-dist} is met.
            \begin{corollary}\label{cor:bayesian-eps-dist}
                Given a posterior $\posterior$, a target multitask distribution $\hypertargetdist$ satisfying \Cref{as:closed,as:closed-var} with p.d.f. $\hypertargetpdf ~ : ~ \distvals^T \mapsto \left[\minp^T,\maxp^T\right]$, a source task distribution $\hypersourcedist$ with p.d.f. $\hypersourcepdf ~ : ~ \distvals^T \mapsto \left[\minp^S,\maxp^S\right]$, $\minp^S > 0$, and an $\eps \in (0,1)$ satisfying \Cref{as:eps-dist},
                $$\trueprob{\epistemicerr \geq \margin + \bias + \CTheta + \distshift} \leq \frac{\beta}{\margin^2} \left( \mathbb{V}_{\overline{\dataval}} \left[ \hypersourcedist \right] + \eps^2 \right)$$
                for any $\margin \in \Rplus$ and where $\CTheta \coloneqq \tv{\posterior}{\bestfitthetadist}$, $\beta = \maxp^T / \minp^S$, and $\overline{\dataval} \in \datavals$ such that $\mathbb{V}_{\overline{\dataval}} \left[ \hypertargetdist \right] = \maxvar{\hypertargetdist}$.
            \end{corollary}
            \Cref{cor:bayesian-eps-dist} follows directly from substitution of the result in \Cref{cor:eps-dist} into the statement in \Cref{cor:bayesian}.

\section{CONVERSION TO GENERALIZATION BOUNDS}\label{sec:distance}
    While \Cref{def:epistemic} defines epistemic error in terms of the total variation distance, the result in \Cref{thm:imperfect-distshift} can be immediately translated to a probabilistic bound on any function of $\preddist$ and $\targettaskdist$ that is a lower bound on some function of the total variation distance.
    \Cref{cor:cross-ent,cor:l1,cor:hellinger} provide probabilistic bounds on the cross-entropy from $\targettaskdist$ to $\preddist$ (i.e., the learner's cross-entropy loss), the $L_1$ distance between $\targettaskdist$ and $\preddist$ (i.e., the learner's $L_1$ loss), and the Hellinger distance between $\targettaskdist$ and $\preddist$, respectively.

    \paragraph{Cross-entropy}
        \Cref{cor:cross-ent} provides a probabilistic upper bound on the cross-entropy from a target task $\targettaskdist$ to the learner's predictor $\preddist$.
        It relies on the following definitions:
        \begin{definition}[Entropy, Cross-Entropy, and KL Divergence]\label{def:ent}
            Given two distinct discrete distributions $\dist$ and $\repl{\dist}$ with p.m.f.s $p$ and $\repl{p}$, respectively, and shared support $A$, $\vert A \vert \in \mathbb{N}$,
            \begin{itemize}
                \item the entropy of $\dist$ is $\ent{\dist} \coloneqq - \sum_{a \in A} \log{\left( p(a) \right)} ~ p(a)$,
                \item the cross-entropy from $\dist$ to $\repl{\dist}$ is $\ent{\dist \left| \left| \repl{\dist} \right. \right.} \coloneqq - \sum_{a \in A} \log{\left( \repl{p}(a) \right)} ~ p(a)$, and
                \item the Kullback-Leibler divergence from $\dist$ to $\repl{\dist}$ is $\kld{\dist}{\repl{\dist}} \coloneqq \sum_{a \in A} \log{\left( \frac{p(a)}{\repl{p}(a)} \right)} ~ p(a)$.
            \end{itemize}
        \end{definition}

        \noindent\textbf{\Cref{cor:cross-ent}} (Bound on the cross-entropy loss)\textbf{.}\textit{
            Given that $\lvert \datasamples \rvert \in \mathbb{N}$ (\textbf{finite sample space}), a predictor $\preddist$ with p.m.f. $\widehat{p} ~ : ~ \datasamples \mapsto \left[ \underline{v}, \overline{v} \right]$, $\underline{v} > 0$, a source task distribution $\hypersourcedist$, and a target multitask distribution $\hypertargetdist$,
            \begin{align}
                \trueprob{ \loss_{\mathrm{CE}}\left( \preddist, \targettaskdist \right) \geq \frac{2}{\underline{v}} \left( \margin + \bias + \convergence + \distshift \right)^2 + \mathcal{E} } &\leq \frac{\maxvar{\hypertargetdist}}{\margin^2} \nonumber
            \end{align}
            for any $\margin \in \Rplus$ and where $\loss_{\mathrm{CE}}\left( \preddist, \targettaskdist \right) \coloneqq \ent{\targettaskdist \left| \left| \preddist \right. \right.}$ is the cross-entropy loss and $\mathcal{E} \coloneqq \ent{\targettaskdist}$ is the entropy of the target task.
        }
        \begin{proof}[Proof of \Cref{cor:cross-ent}]
            Given that $\lvert \datasamples \rvert \in \mathbb{N}$ (\textbf{finite sample space}), a predictor $\preddist$ with p.m.f. $\widehat{p} ~ : ~ \datasamples \mapsto \left[ \underline{v}, \overline{v} \right]$, $\underline{v} > 0$, a source task distribution $\hypersourcedist$, a target multitask distribution $\hypertargetdist$, and any $\margin \in \Rplus$, we have that
            \begin{align}
                \ent{\targettaskdist \left| \left| \preddist \right. \right.} &= \kld{\targettaskdist}{\preddist} + \ent{\targettaskdist} \nonumber \\
                &\leq \frac{2}{\underline{v}} \tv{\preddist}{\targettaskdist}^2 + \ent{\targettaskdist} &\text{(Reverse Pinsker inequality \citep{sason_upper_2015})} \label{eq:reverse-pinskers} \\
                &= \frac{2}{\underline{v}} \epistemicerr^2 + \mathcal{E} &\text{(\Cref{def:epistemic})} \nonumber
            \end{align}
            and so by \Cref{thm:imperfect-distshift},
            \begin{align}
                \trueprob{ \loss_{\mathrm{CE}}\left( \preddist, \targettaskdist \right) \geq \frac{2}{\underline{v}} \left( \margin + \bias + \convergence + \distshift \right)^2 + \mathcal{E} } \leq \frac{\maxvar{\hypertargetdist}}{\margin^2} \nonumber
            \end{align}
            where $\loss_{\mathrm{CE}}\left( \preddist, \targettaskdist \right) \coloneqq \ent{\targettaskdist \left| \left| \preddist \right. \right.}$ and $\mathcal{E} \coloneqq \ent{\targettaskdist}$.
            
            In line \eqref{eq:reverse-pinskers}, the conditions needed for the reverse Pinsker inequality to hold are ensured by the requirement that $\underline{v} > 0$ and of a finite sample space.
            The condition of \Cref{thm:imperfect-distshift} that $\hypertargetdist$ satisfies \Cref{as:closed} is ensured by the requirement of a finite sample space (see \Cref{rem:closed}).
        \end{proof}
        
    \paragraph{$L_1$ distance}
        \Cref{cor:l1} provides a probabilistic upper bound on the $L_1$ distance between a target task $\targettaskdist$ and the learner's predictor $\preddist$.
        \begin{corollary}[Bound on the $L_1$ loss]\label{cor:l1}
            Given a predictor $\preddist$, a source task distribution $\hypersourcedist$, and a target multitask distribution $\hypertargetdist$ satisfying \Cref{as:closed},
            \begin{align}
                \trueprob{ \loss_{L_1}\left( \preddist, \targettaskdist \right) \geq 2 \left( \margin + \bias + \convergence + \distshift \right) } &\leq \frac{\maxvar{\hypertargetdist}}{\margin^2} \nonumber
            \end{align}
            for any $\margin \in \Rplus$ and where $\loss_{L_1}\left( \preddist, \targettaskdist \right) \coloneqq \int_{\datasamples} \left| \preddistpdf(\datasample) - \targettaskpdf(\datasample) \right| ~ d\datasample$ and $q^t$ is the p.d.f. of $\targettask$.
        \end{corollary}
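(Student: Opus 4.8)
The plan is to reduce the claim directly to \Cref{thm:imperfect-distshift} by exploiting the standard identity that the $L_1$ distance between two densities equals twice their total variation distance. Concretely, I would first establish that $\loss_{L_1}(\preddist, \targettaskdist) = 2 \tv{\preddist}{\targettaskdist} = 2\epistemicerr$, after which the corollary follows simply by rescaling the event appearing in \Cref{thm:imperfect-distshift} by a factor of two. This mirrors how \Cref{cor:cross-ent} is obtained from the main result: the work is entirely in relating the chosen loss to the total variation distance, with no fresh probabilistic content.

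To establish the identity, I would proceed exactly as in the manipulations used in the proof of \Cref{lem:eps-dist-d}. Let $\datasamples^+ \coloneqq \{\datasample \in \datasamples : \preddistpdf(\datasample) \geq \targettaskpdf(\datasample)\}$ be the region on which the predictor's density dominates, and let $\datasamples^-$ denote its complement. Since $\preddistpdf$ and $\targettaskpdf$ both integrate to one, $\int_{\datasamples^+}(\preddistpdf - \targettaskpdf)~d\datasample = \int_{\datasamples^-}(\targettaskpdf - \preddistpdf)~d\datasample$, so each equals exactly half of $\int_{\datasamples}\lvert \preddistpdf - \targettaskpdf \rvert~d\datasample = \loss_{L_1}(\preddist, \targettaskdist)$. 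At the same time, the supremum in \Cref{def:tv} is attained at $\datasamples^+$: for any event $\mathbf{a}$, the signed difference $\preddist(\mathbf{a}) - \targettaskdist(\mathbf{a}) = \int_{\mathbf{a}}(\preddistpdf - \targettaskpdf)~d\datasample$ is maximized by including precisely the points where $\preddistpdf \geq \targettaskpdf$ and excluding the rest, so $\lvert \preddist(\mathbf{a}) - \targettaskdist(\mathbf{a})\rvert \leq \preddist(\datasamples^+) - \targettaskdist(\datasamples^+)$ for all $\mathbf{a}$. Combining, $\tv{\preddist}{\targettaskdist} = \preddist(\datasamples^+) - \targettaskdist(\datasamples^+) = \tfrac{1}{2}\loss_{L_1}(\preddist, \targettaskdist)$, and invoking \Cref{def:epistemic} yields $\loss_{L_1}(\preddist, \targettaskdist) = 2\epistemicerr$.

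With the identity in hand, the bound is immediate: the event $\{\loss_{L_1}(\preddist, \targettaskdist) \geq 2(\margin + \bias + \convergence + \distshift)\}$ coincides with $\{\epistemicerr \geq \margin + \bias + \convergence + \distshift\}$, whose probability is bounded above by $\sup_{\dataval \in \datavals}\var{\hypertargetdist}/\margin^2$ by \Cref{thm:imperfect-distshift}. I expect the only delicate step to be the verification that the supremum over events in \Cref{def:tv} is genuinely attained at $\datasamples^+$ (equivalently, that no finer partition of the sample space beats it); this is the sole place where the argument does real work, and it is nothing more than the classical characterization of total variation distance as one half of the $L_1$ distance. Everything that follows is pure substitution.
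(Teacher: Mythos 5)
Your proposal is correct and takes essentially the same route as the paper: the paper's proof of \Cref{cor:l1} consists precisely of invoking the identity $\loss_{L_1}\left( \preddist, \targettaskdist \right) = 2 \, \tv{\preddist}{\targettaskdist} = 2 \, \epistemicerr$ and substituting into \Cref{thm:imperfect-distshift}. The only difference is that the paper cites this identity as a known fact, whereas you derive it from scratch via the set $\datasamples^+$ where $\preddistpdf \geq \targettaskpdf$ --- a correct, standard argument that adds rigor but no new content.
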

        \Cref{cor:l1} is a direct consequence of the fact that $\ell_{L_1}\left( \preddist, \targettaskdist \right) = 2 \tv{\preddist}{\targettaskdist} = 2 \epistemicerr$ (\Cref{def:epistemic}).

    \paragraph{Hellinger distance}
        \Cref{cor:hellinger} provides a probabilistic upper bound on the Hellinger distance between a target task $\targettaskdist$ and the learner's predictor $\preddist$.
        \begin{corollary}[Bound on the Hellinger distance]\label{cor:hellinger}
            Given a predictor $\preddist$, a source task distribution $\hypersourcedist$, and a target multitask distribution $\hypertargetdist$ satisfying \Cref{as:closed},
            \begin{align}
                \trueprob{ \mathrm{d}_{\mathrm{H}^2} \left( \preddist, \targettaskdist \right) \geq \margin + \bias + \convergence + \distshift} &\leq \frac{\maxvar{\hypertargetdist}}{\margin^2} \nonumber
            \end{align}
            for any $\margin \in \Rplus$ and where $\mathrm{d}_{\mathrm{H}^2} \left( \preddist, \targettaskdist \right) \coloneqq \frac{1}{2} \int_{\datasamples} \left( \sqrt{\preddistpdf\left( \datasample \right)} - \sqrt{\targettaskpdf\left( \datasample \right)} \right)^2 ~ d\datasample$ and $q^t$ is the p.d.f. of $\targettask$.
        \end{corollary}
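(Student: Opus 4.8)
The plan is to follow the same template used for \Cref{cor:l1} and \Cref{cor:cross-ent}: relate the squared Hellinger distance to the total variation distance, and then invoke \Cref{thm:imperfect-distshift}. The governing principle, stated in \Cref{sec:neighbors}, is that the epistemic error bound transfers immediately to any quantity that lower-bounds (a function of) the TV distance between $\preddist$ and $\targettaskdist$. For the $L_1$ loss this relationship was an exact identity, so the transfer was a direct substitution; here it will be a one-sided inequality, so I will additionally use a monotonicity-of-probability (event-containment) argument.

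First I would establish the pointwise bound $\mathrm{d}_{\mathrm{H}^2}\left( \preddist, \targettaskdist \right) \leq \tv{\preddist}{\targettaskdist}$. This is a standard relationship. At each $\datasample \in \datasamples$, factor $\lvert \preddistpdf(\datasample) - \targettaskpdf(\datasample) \rvert = \lvert \sqrt{\preddistpdf(\datasample)} - \sqrt{\targettaskpdf(\datasample)} \rvert \cdot \left( \sqrt{\preddistpdf(\datasample)} + \sqrt{\targettaskpdf(\datasample)} \right)$, and note that $\sqrt{\preddistpdf(\datasample)} + \sqrt{\targettaskpdf(\datasample)} \geq \lvert \sqrt{\preddistpdf(\datasample)} - \sqrt{\targettaskpdf(\datasample)} \rvert$. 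Hence $\left( \sqrt{\preddistpdf(\datasample)} - \sqrt{\targettaskpdf(\datasample)} \right)^2 \leq \lvert \preddistpdf(\datasample) - \targettaskpdf(\datasample) \rvert$ everywhere; integrating over $\datasamples$ and multiplying both sides by $\tfrac{1}{2}$ yields $\mathrm{d}_{\mathrm{H}^2}\left( \preddist, \targettaskdist \right) \leq \tfrac{1}{2} \int_{\datasamples} \lvert \preddistpdf - \targettaskpdf \rvert \, d\datasample = \tv{\preddist}{\targettaskdist}$, where the last equality is the density form of \Cref{def:tv} (the supremum is attained on the set where $\preddistpdf \geq \targettaskpdf$).

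Next, since $\mathrm{d}_{\mathrm{H}^2}\left( \preddist, \targettaskdist \right) \leq \tv{\preddist}{\targettaskdist} = \epistemicerr$ holds deterministically for every realization of $\targettask \sim \hypertargetdist$ (the randomness in the bound), the event $\{ \mathrm{d}_{\mathrm{H}^2}\left( \preddist, \targettaskdist \right) \geq t \}$ is contained in the event $\{ \epistemicerr \geq t \}$ for any threshold $t$. Monotonicity of probability then gives $\trueprob{ \mathrm{d}_{\mathrm{H}^2}\left( \preddist, \targettaskdist \right) \geq t } \leq \trueprob{ \epistemicerr \geq t }$. Taking $t = \margin + \bias + \convergence + \distshift$ and applying \Cref{thm:imperfect-distshift} to the right-hand side closes the argument.

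I do not expect a substantive obstacle; the only point demanding care is the normalization convention, which fixes the direction of the key inequality. Because the squared Hellinger distance is defined here with the factor $\tfrac{1}{2}$ (so that it lies in $[0,1]$ and equals one minus the Bhattacharyya affinity), it is bounded \emph{above} by TV, which is exactly the direction needed for the event-containment step. Had the unnormalized Hellinger distance been used, or had the relevant inequality run the other way, an additional square-root factor would enter the margin, as it effectively does in the cross-entropy case through the reverse Pinsker inequality.
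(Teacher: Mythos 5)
Your proof is correct and matches the paper's own argument, which likewise derives \Cref{cor:hellinger} as a direct consequence of the standard inequality $\mathrm{d}_{\mathrm{H}^2}\left( \preddist, \targettaskdist \right) \leq \tv{\preddist}{\targettaskdist} = \epistemicerr$ combined with \Cref{thm:imperfect-distshift}. You simply make explicit two steps the paper leaves implicit --- the pointwise factorization proving the Hellinger--TV inequality and the event-containment/monotonicity argument --- and your remark about the $\tfrac{1}{2}$ normalization fixing the direction of the inequality is accurate.
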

        \Cref{cor:hellinger} is a direct consequence of the fact that $\mathrm{d}_{\mathrm{H}^2} \left( \preddist, \targettaskdist \right) \leq \tv{\preddist}{\targettaskdist} = \epistemicerr$ (\Cref{def:epistemic}).
        \begin{remark}
            The Hellinger distance between the target DGP and the learner's predictive distribution is an important measure in Bayesian asymptotic theory.
            In the setting in which the predictive distribution is formed on the basis of data arising from the target DGP (i.e., there is no distribution shift), the conditions for \textnormal{Hellinger consistency} are those under which (almost surely) the learner eventually assigns probability 0 to any set of DGPs that excludes a neighborhood of the target DGP \citep{walker_modern_2004}.
            In our setting (imperfect multitask learning setting with distribution shift), \Cref{cor:hellinger} provides the probability that the learner's predictor is the given margin away from the target DGP.
            In a Bayesian learning setting, \Cref{cor:hellinger} is therefore related to the conditions under which Hellinger consistency is violated.
        \end{remark}

\section{EMPIRICAL DEMONSTRATIONS}\label{ap:experiments}
    \newcommand{\covariate}{\boldsymbol{\xi}}
    \newcommand{\tvub}[2]{\overline{\mathrm{d_{TV}}}\left( #1, #2 \right)}

    All computations were run on a laptop using a single CPU.

    \subsection{Bayesian Transfer Learning in Total Variation Neighborhoods}\label{ap:btl-tv}

    We here empirically (i) illustrate the conclusions from the results in \Cref{sec:cases} and (ii) demonstrate the relationship between the looseness in the epistemic error margin and the presence of negative transfer (discussed in \Cref{sec:negtransfer}).

    Since the total variation distance is in general computationally intractable, we approximate it with the upper bound (established by Pinsker's inequality):
    $$\tv{P}{\repl{P}} \leq \tvub{P}{\repl{P}} \coloneqq \sqrt{\frac{\kld{P}{\repl{P}}}{2}}$$
    where $\kld{P}{\repl{P}}$ is the Kullback-Leibler divergence from a distribution $P$ to $\repl{P}$ as defined in \Cref{def:ent}.\footnote{
        To compute $\kld{P}{\repl{P}}$ from a continuous distribution $\dist$ to a continuous distribution $\repl{P}$, we used a numerical approximation on the basis of 400 samples from $P$.
    }

    \paragraph{Setting}
        In the Bayesian linear regression setting, the source data was generated as
        $$\sourcedata \sim \mathrm{N}\left( \beta^S_1\covariate_{(1:n,1)} + \beta^S_2\covariate_{(1:n,2)}, \boldsymbol{\sigma}^S_{(1:n)} \right)$$
        given a set of known covariate values $\covariate_{(1:n)}$.
        Each source task was characterized by the same value of $\beta^S = \left( \beta^S_1, \beta^S_2 \right)$ and a different value of $\sigma^S$ ($\boldsymbol{\sigma}^S_{(1:n)} = [ \sigma^S_1, \ldots{}, \sigma^S_n ]$).
        Values of $\sigma^S$ (distinct source tasks) were sampled independently from an inverse gamma distribution with concentration parameter $\alpha_S = 20$ and rate parameter $\delta_S = 10$.
        In all cases, the learner observed only one data point, corresponding to covariates each drawn from a standard uniform distribution, from each source task.

        Data in each target task followed a data-generating distribution of the same form:
        $$\datasample \sim \mathrm{N}\left( \beta^T_1\xi_1 + \beta^T_2\xi_2, \sigma^T \right)$$
        where $\beta^T = \left( \beta^T_1, \beta^T_2 \right)$ not necessarily equal to $\beta^S$.
        In all cases, the learner observed one data point corresponding to covariates $\xi_1 = \xi_2 = 1$ when encountering the target task.

        The learner estimates a Normal-Inverse-Gamma model
        $$\beta \vert \sigma^2 \sim \mathrm{N}\left( \beta^0, \sigma^2B_0 \right), \sigma^2 \sim \mathrm{IG}\left( \alpha_0, \delta_0 \right)$$
        with $\beta^0 = [0, 0]$, $B_0 = \begin{bmatrix} 1 & 0 \\ 0 & 1 \end{bmatrix}$, $\alpha_0 = 20$, and $\delta_0 = 10$.
        The learner is aware that the source tasks are sampled from a distribution, and so updates only its distribution over $\beta$ on the basis of the source data.
        Notice the learner's prior (and posterior) distribution over $\sigma^2$ is the same as the distribution among the source tasks, i.e., there is no approximation bias ($\bias = 0$).

    \paragraph{Posterior convergence and neighborhood size}
        \Cref{fig:tv-neighborhoods} illustrates the conclusions from the results in \Cref{sec:cases}.
        In \Cref{fig:convergence,fig:epsilon}, each target task is constrained to fall within an $\epsilon$-neighborhood of a source task.
        Neighborhood size $\epsilon$ is varied across simulations.
        On each simulation, 10 values of $\sigma^S$ (distinct source tasks) are randomly sampled from the source task distribution described above to generate the source data ($\beta^S = [0,1]$).
        To generate synthetic target tasks, we select at random (i) an index $i \in [1, \ldots{}, 10]$ to indicate a source task from among the source tasks encountered by the learner, and (ii) a value $\repl{\epsilon}$ uniformly between 0 and the specified neighborhood size.
        A target task distance $\repl{\epsilon}$ away from the $i^{\mathrm{th}}$ source task was constructed by setting $\sigma^T = \sigma^S_i$ and adjusting $\beta^T$.
        \newcommand{\figwidth}{.85\linewidth}
        \begin{figure}[t!]
            \begin{subfigure}{.32\linewidth}
                \includegraphics[width=\linewidth]{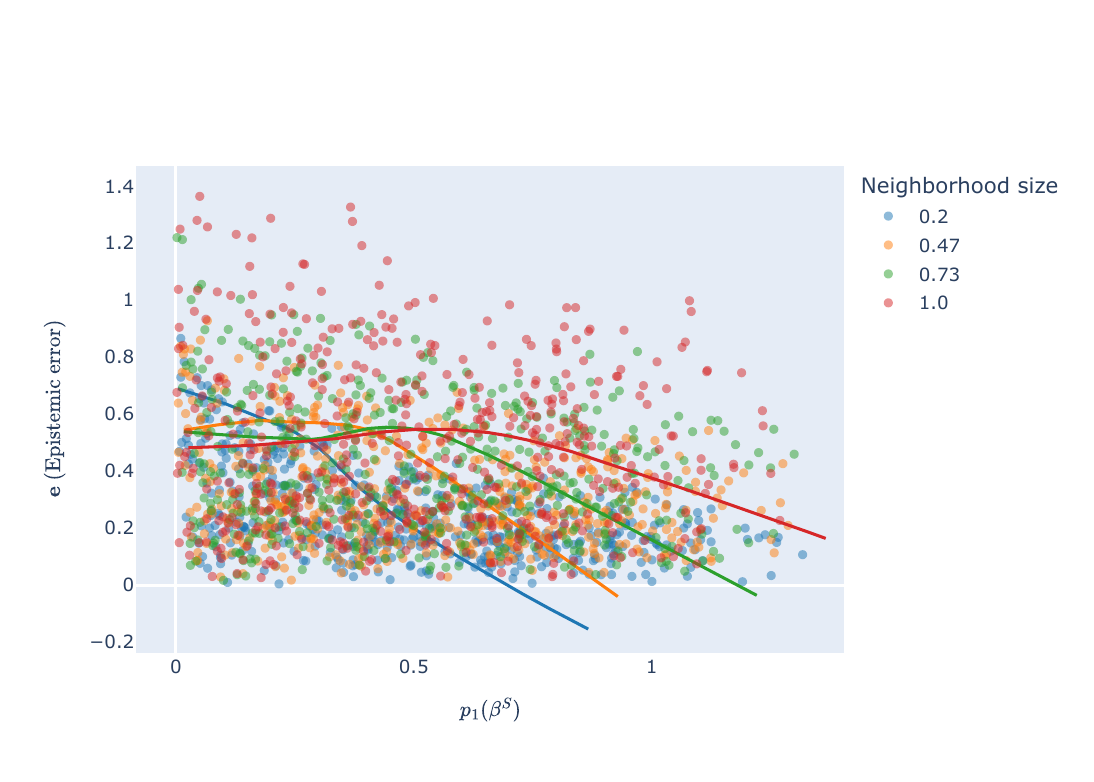}
                \caption{Effect of convergence of the posterior distribution.
                    LOWESS trend lines for data at each neighborhood size are visualized in the corresponding color.    
                }
                \label{fig:convergence}
            \end{subfigure}\hfill\begin{subfigure}{.32\linewidth}
                \includegraphics[width=\linewidth]{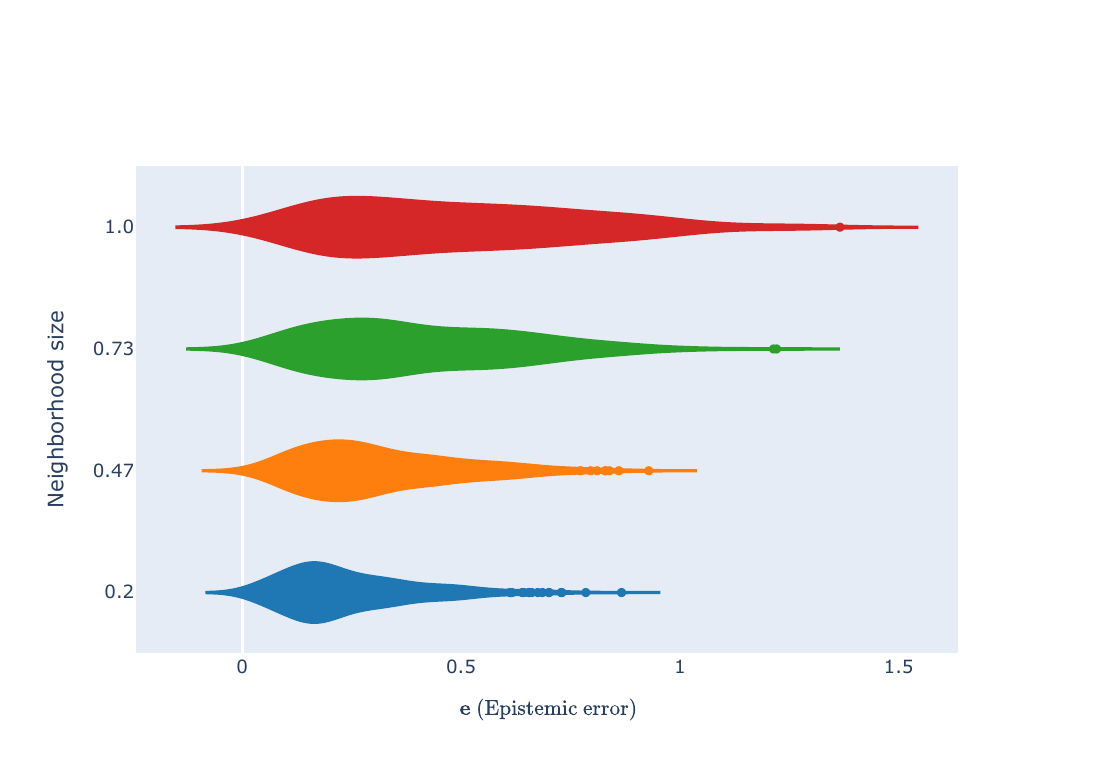}
                \caption{Effect of neighborhood size. \\ ~ \\ ~ \\ ~ \\ ~}
                \label{fig:epsilon}
            \end{subfigure}\hfill\begin{subfigure}{.32\linewidth}
                \includegraphics[width=\linewidth]{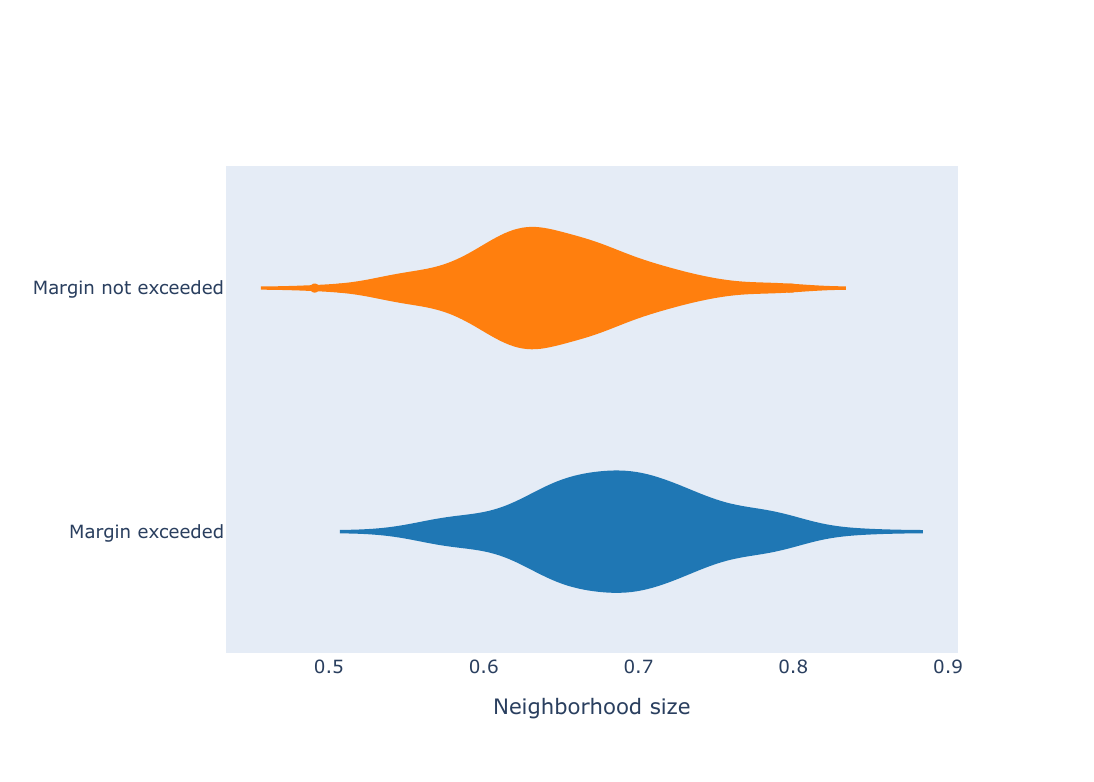}
                \caption{Effect of neighborhood size on the probability of exceeding the epistemic error margin. \\ ~ \\ ~}
                \label{fig:epsilon-delta}
            \end{subfigure}
            \caption{(a--b) Epistemic error $\epistemicerr$ as a function of convergence of the posterior distribution and neighborhood size.
                For each neighborhood size $\epsilon$, the plots show 500 simulations where variation is with respect to the set of source tasks and target task sampled from their respective task distributions.
                (c) The probability of exceeding the epistemic error margin as a function of neighborhood size.
                The plots show 500 simulations where variation is with respect to the target task sampled from the target task distribution.
            }
            \label{fig:tv-neighborhoods}
        \end{figure}

        \Cref{fig:convergence} shows that epistemic error decreases as a function of the probability the learner's posterior assigns to the source data-generating value $\beta^S$.
        This illustrates the conclusion from \Cref{cor:bayesian} that the epistemic error margin increases as a function of $\CTheta \coloneqq \tv{\posterior}{\bestfitthetadist}$ (since the best approximate parameter distribution assigns probability 1 to a small neighborhood of $\beta^S$, this tracks closely with the complement of $\CTheta$).

        \Cref{fig:epsilon} shows that epistemic error increases as a function of the neighborhood size.
        While these results are reminiscent of the conclusion from \Cref{cor:eps} that the probability the learner exceeds a given epistemic error margin increases as a function of neighborhood size $\epsilon$, they could arise from the effect of neighborhood size on the epistemic error margin, i.e., the effect of $\epsilon$ on the extent of distribution shift $\distshift$, as opposed to on target task variability $\maxvar{\hypertargetdist}$ (recall from \Cref{lem:eps-D} that the extent of distribution shift increases as a function of $\epsilon$).
        
        \Cref{fig:epsilon-delta} directly tests the effect of neighborhood size on the probability of exceeding a given epistemic error margin.
        Here, target tasks are sampled from a given target multitask distribution.
        Each target task was characterized by the same value of $\beta^T$ ($\beta^T = [1,1]$; this corresponds to the setting of positive transfer in \Cref{fig:pos-exp}) and a different value of $\sigma^T$ sampled independently from an inverse gamma distribution with concentration parameter $\alpha_T = 20$ and rate parameter $\delta_T = 10$.
        To isolate the effect of task variability, we also imposed perfect learning ($\preddist = \bary{\hypersourcedist}$).
        The $x$-axis of \Cref{fig:epsilon-delta} shows values of $\tvub{\targettask}{\sourcetask}$ across simulations, where $\sourcetask \sim \hypersourcedist$ is randomly sampled on each simulation; this ensures that \Cref{as:eps} is satisfied for $\epsilon$ equal to the value indicated on the $x$-axis.
        The $y$-axis shows whether or not the epistemic error margin was exceeded on each simulation.
        \Cref{fig:epsilon-delta} shows that the probability the learner exceeds a given epistemic error margin increases as a function of neighborhood size.
        
        \begin{figure}[h!]
            \begin{subfigure}{.32\linewidth}
                \centering
                \begin{tikzpicture}
                    \node (P0) {$\widehat{P}_0$};
                    \node[circle,minimum size=.5in,draw=black,dashed,below=of P0] (S) {$\bary{\hypersourcedist}$};
                    \node[] at ($(P0.center)!.33!(S.center)$) (P1) {$\widehat{P}_1$};
                    \node[] at ($(P0.center)!.66!(S.center)$) (P2) {$\widehat{P}_2$};
                    \node[right=of S] (T){$\targettask$};

                    \draw[black] (P0) -- (T.center);
                    \draw[black] (P1) -- (T.center);
                    \draw[black] (P2) -- (T.center);
                \end{tikzpicture}
                \caption{Positive transfer. \\ ~}
                \label{fig:pos}
            \end{subfigure}\hfill\begin{subfigure}{.32\linewidth}
                \centering
                \begin{tikzpicture}
                    \node (P0) {$\widehat{P}_0$};
                    \node[circle,minimum size=.5in,draw=black,dashed,below left=of P0] (S) {$\bary{\hypersourcedist}$};
                    \node[] at ($(P0.center)!.33!(S.center)$) (P1) {$\widehat{P}_1$};
                    \node[] at ($(P0.center)!.66!(S.center)$) (P2) {$\widehat{P}_2$};
                    \node[below right=of P0,xshift=-.2cm,yshift=-.1cm] (T){$\targettask$};

                    \draw[black] (P0) -- (T.center);
                    \draw[black] (P1) -- (T.center);
                    \draw[black] (P2) -- (T.center);
                \end{tikzpicture}
                \caption{Negative transfer. \\ ~}
                \label{fig:neg}
            \end{subfigure}\hfill\begin{subfigure}{.32\linewidth}
                \centering
                \begin{tikzpicture}
                    \node (P0) {$\widehat{P}_0$};
                    \node[circle,minimum size=.5in,draw=black,dashed,below=of P0] (S) {$\bary{\hypersourcedist}$};
                    \node[] at ($(P0.center)!.33!(S.center)$) (P1) {$\widehat{P}_1$};
                    \node[] at ($(P0.center)!.66!(S.center)$) (P2) {$\widehat{P}_2$};
                    \node[right=of P1,xshift=1mm] (T){$\targettask$};

                    \draw[black] (P0) -- (T.center);
                    \draw[black] (P1) -- (T.center);
                    \draw[black] (P2) -- (T.center);
                \end{tikzpicture}
                \caption{Positive followed by negative transfer.}
                \label{fig:posneg}
            \end{subfigure}
            \caption{\Cref{fig:pos,fig:neg,fig:posneg} show schematically how additional learning from the source data (decreased distance of a predictor from $\bary{\hypersourcedist}$) can have different effects on the epistemic error margin (distance to a $\targettaskdist$, indicated by the solid lines).
            }
            \label{fig:negtransfer}
        \end{figure}
        
        \begin{figure}[h!]
            \begin{subfigure}{.32\linewidth}
        		\includegraphics[width=\figwidth]{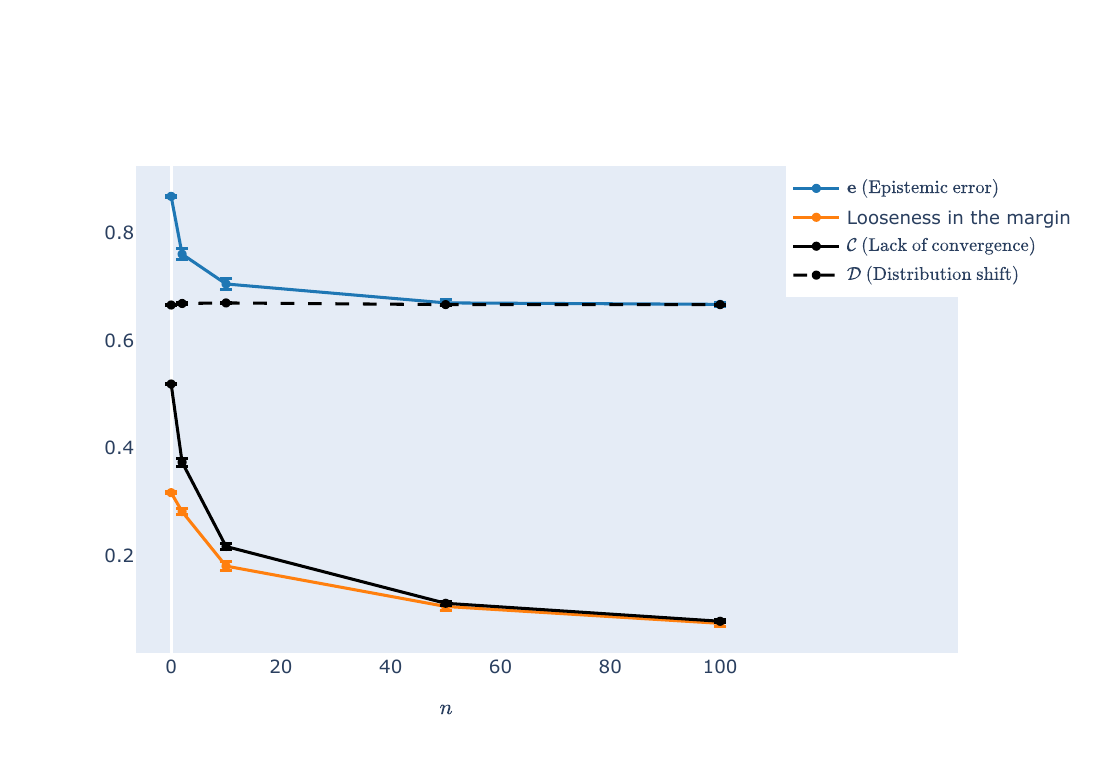}
        		\caption{Positive transfer (represented in \Cref{fig:pos}): $\beta^S = [0,1], \beta^T = [1,1]$. \\ ~}
        		\label{fig:pos-exp}
        	\end{subfigure}\hfill\begin{subfigure}{.32\linewidth}
        		\includegraphics[width=\figwidth]{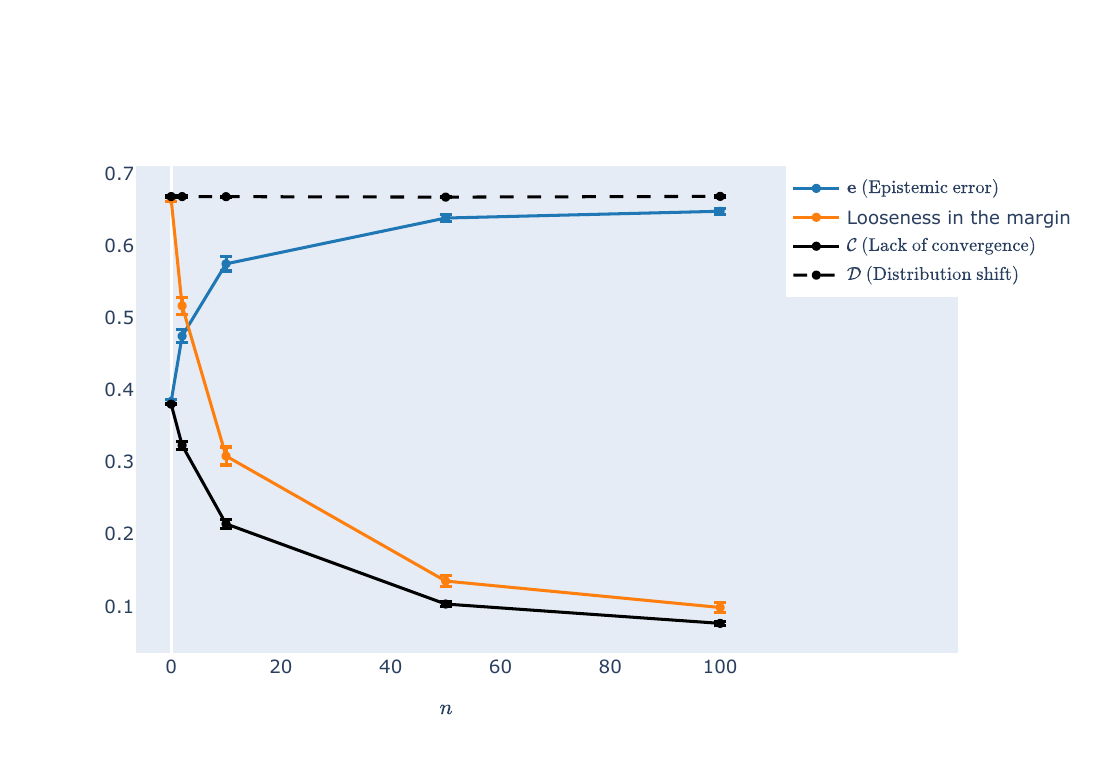}
        		\caption{Negative transfer (represented in \Cref{fig:neg}): $\beta^S = [-1,0], \beta^T = [1,0]$. \\ ~}
        		\label{fig:neg-exp}
        	\end{subfigure}\hfill\begin{subfigure}{.32\linewidth}
        		\includegraphics[width=\figwidth]{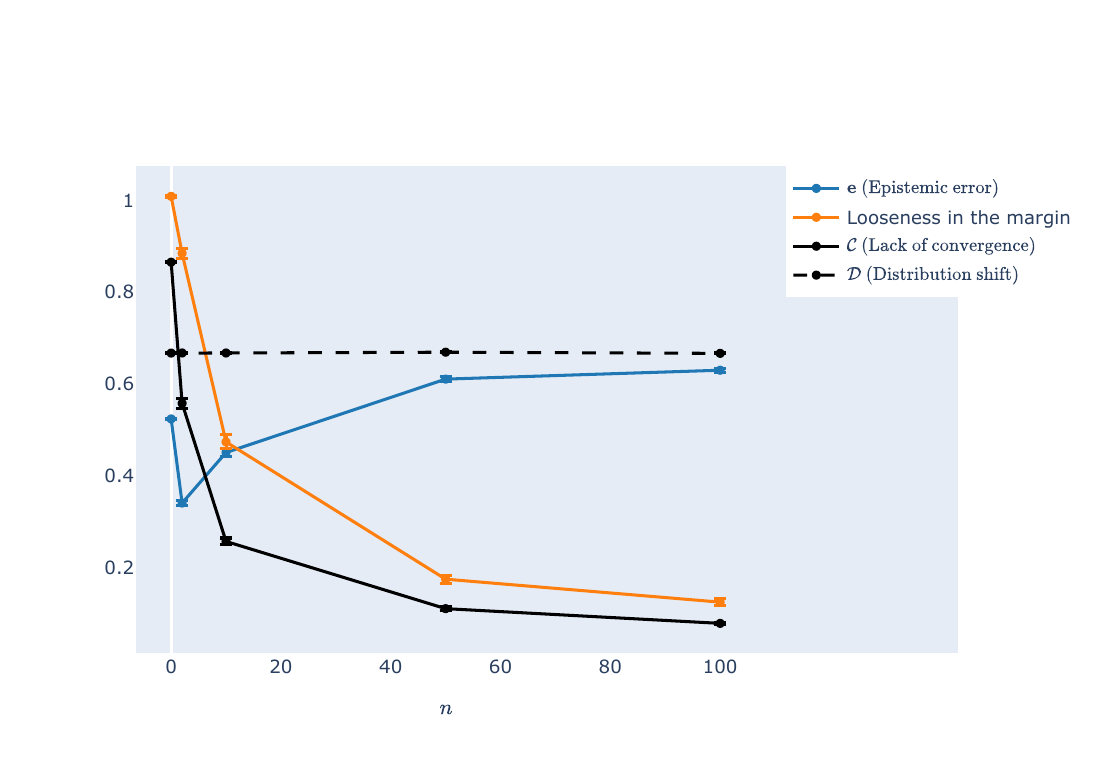}
        		\caption{Positive followed by negative transfer (represented in \Cref{fig:posneg}): $\beta^S = [0,2], \beta^T = [0,1]$.}
        		\label{fig:posneg-exp}
        	\end{subfigure}
        	\caption{Epistemic error $\epistemicerr$ incurred in synthetic settings designed to reflect the settings in \Cref{fig:pos,fig:neg,fig:posneg}, respectively (the horizontal axes of \Cref{fig:pos,fig:neg,fig:posneg} represent the value of $\beta_1$ and the vertical axes represent the value of $\beta_2$).
        		Looseness in the epistemic error margin is computed as $\left( \tvub{\preddist}{\bestfitdist{S}} + \tvub{\bary{\hypersourcedist}}{\bary{\hypertargetdist}} \right) - \tvub{\preddist}{\targettask} \approx \left( \convergence + \distshift \right) - \epistemicerr$ (recalling that in this case $\bestfitdist{S} = \bary{\hypersourcedist}$).
        		Lines and error bars denote means and standard errors, respectively, across 500 simulations, where variation is with respect to the set of source tasks and target task sampled from their respective task distributions.
        	}
        	\label{fig:experiments-negtransfer}
        \end{figure}

    \paragraph{Negative transfer}
        \Cref{sec:negtransfer} defined the occurrence of \textit{negative transfer} as a case where predictors closer to the source data distribution $\bary{\hypersourcedist}$ commit larger epistemic errors than those further away.
        \Cref{fig:negtransfer} shows some examples of when such negative transfer might occur.
        In \Cref{fig:neg,fig:posneg}, the
        learner is expected to (eventually) experience negative transfer; in these cases, $\preddist_0$ is closer to $\targettask$ than $\bary{\hypersourcedist}$, and so learning from the source data increases the epistemic error.
        
        \Cref{fig:experiments-negtransfer} demonstrates the relationship between, on one hand, the looseness in the epistemic error margin and, on the other, the presence of negative transfer.
        It shows the results of experiments using the Bayesian linear regression setting described above, and where the source and target task distributions are set to correspond to the schematics shown in \Cref{fig:negtransfer}.
        On each simulation, $n$ values of $\sigma^S$ (distinct source tasks) are randomly sampled from the source task distribution described above to generate the source data, where the value of $n$ is indicated on the $x$-axes and the value of $\beta^S$ in the subfigure captions.
        To generate synthetic target tasks, values of $\sigma^T$ (distinct target tasks) were sampled independently from an inverse gamma distribution with concentration parameter $\alpha_T = 20$ and rate parameter $\delta_T = 10$.

        \Cref{fig:experiments-negtransfer} shows that the looseness in the epistemic error margin (orange line) is very high (at or above the extent of distribution shift $\distshift$; dashed black line) only in the cases where the learner will experience negative transfer, i.e., increases in epistemic error (blue line) as a function of $n$.
        As the learner acquires more source data, its lack of convergence ($\convergence$; solid black line) generally declines: The learner's predictor converges on the true source data distribution.
        As $\convergence$ declines, the only remaining sources of epistemic error are distribution shift and target task variability (recalling that approximation bias $\bias = 0$).
        The extent of distribution shift far outweighs the extent of task variability: As it acquires more source data, the learner's epistemic error converges to the degree of distribution shift.
        As an additional consequence of the learner's predictor converging on the true source data distribution (decreases in $\convergence$), the looseness in the epistemic error margin (which is due to the discrepancy between $\tv{\preddist}{\bary{\hypertargetdist}}$ and $\convergence + \distshift$) also declines.

    \subsection{Iris Data}\label{ap:iris}
        \newcommand{\sourceflip}{\zeta_{\mathrm{source}}}
        \newcommand{\targetflip}{\zeta_{\mathrm{target}}}
    
        We now present results in a setting constructed using the Iris data set from the UCI Machine Learning Repository \citep{iris_53} to (i) illustrate the implications of \Cref{thm:imperfect-distshift} and (ii) again demonstrate the relationship between the looseness in the epistemic error margin and negative transfer.

        The Iris data set contains data from 147 flowers (observations).
        For each flower, four measurements (sepal length, sepal width, petal length, and petal width) and species are recorded, where species is one of ``setosa'', ``versicolor'', or ``virginica''.
        The learner's goal is to predict a flower's species on the basis of the four measurements.
    
        \paragraph{Setting}
            We constructed source data sets from data accruing to each of ten source ``tasks'', where each task generated a corruption of the original Iris data set.
            To generate data from a given source task, we flipped the labels of the original data set with a pre-specified probability $\sourceflip$.
            From the set of data from all ten source tasks, the learner estimates a multinomial logistic regression model, i.e., models the data as
            $$\dataval_i \sim \mathrm{Multinomial}\left( \{ p(\dataval_i = \datasample \mid \boldsymbol{\xi}_i) ~ : ~ \datasample \in \{ \mathrm{setosa}, \mathrm{versicolor}, \mathrm{virginica} \} \} \right)$$
            where $\boldsymbol{\xi} \in \mathbb{R}^{1470 \times 4}$ contains the four measurement values for each of the 10 tasks generating 147 observations.
            We generated target data by flipping labels of the original Iris data set with a pre-specified probability $\targetflip$.
            
            In this setting, we can exactly compute $\epistemicerr$, $\distshift$, $\maxvar{\hypertargetdist}$, and $\tv{\preddist}{\bary{\hypersourcedist}}$ (which, by the triangle inequality, is a lower bound on $\bias + \convergence$).
            We manipulate the degree of distribution shift $\distshift$ by manipulating $\sourceflip$ and $\targetflip$ (when $\sourceflip \neq \targetflip$, distribution shift has occurred).

            This is a supervised learning setting where the learner aims to predict outputs $\datasample$ given inputs $\xi$.
            In other words, the learner's predictive distribution $\preddist$ depends on the value of the measurements $\xi$.
            Thus, $\epistemicerr$ and $\tv{\preddist}{\bary{\hypersourcedist}}$ are computed conditional on a value $\xi$.
            This reflects a situation where at test time, the learner is tasked with making a prediction given a single set of four measurements.

        \paragraph{Illustrating \Cref{thm:imperfect-distshift}}
            \Cref{fig:iris-thm1} shows that the empirical probability of exceeding the epistemic error margin almost never exceeds the theoretical value provided in \Cref{thm:imperfect-distshift}.\footnote{Exceptions occur when $\delta^{\star}$ approaches 1, where $\hat{\delta}$ appears to fall even above the shaded approximate 95\% confidence region shown in \Cref{fig:iris-thm1}.
                However, this is not necessarily indicative of ``surprisingly'' high values of $\hat{\delta}$: The normal approximation used to construct the approximate confidence region is especially inaccurate for $\delta^{\star} \approx 1$.
            }
            In this figure, we plot the theoretical value of the probability of exceeding the epistemic error margin ($\delta^{\star}$; $x$-axis) against the proportion of tasks for which the epistemic error margin was actually exceeded ($\hat{\delta}$; $y$-axis).
            On each of 200 runs, we selected values $\sourceflip$ and $\targetflip$ separately and uniformly at random between 0 and 1.
            The source data was constructed from the original Iris data set as described above.
            \begin{figure}[h!]
                \begin{subfigure}{.48\linewidth}
        		      \includegraphics[width=\figwidth]{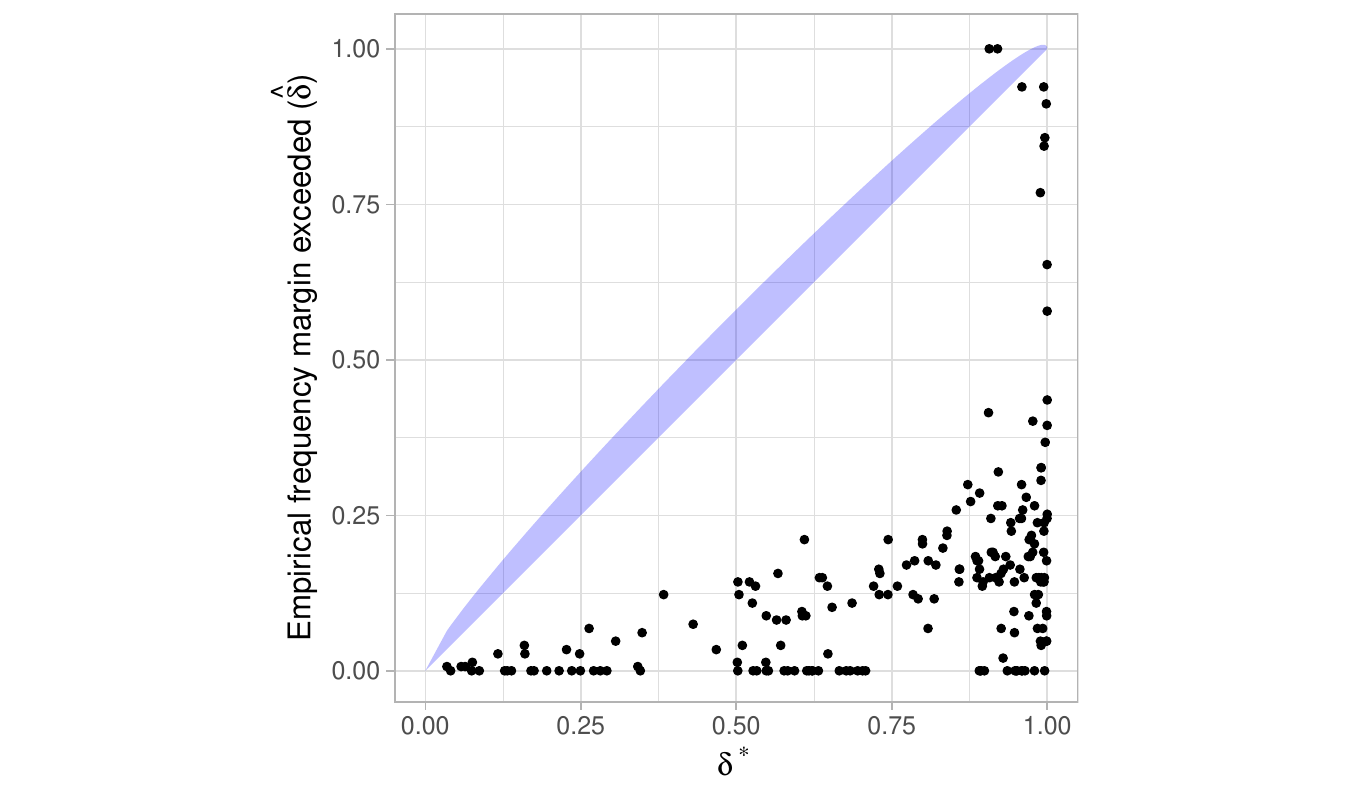}
        		      \caption{Theoretical ($\delta^{\star}$) vs. empirical ($\hat{\delta}$) probability of exceeding the epistemic error margin.
                      $\delta^{\star}$ is computed as $\frac{\maxvar{\hypertargetdist}}{\alpha^2}$.
                        $\hat{\delta}$ is computed as the proportion of the 147 target tasks for which the epistemic error margin was exceeded.
                        The shaded region denotes approximately two standard errors above the value of $\delta^{\star}$ (computed according to a normal approximation; note that this approximation is least accurate for values $\delta^{\star}$ very high or very low, where the bound appears to be exceeded).
                    }
        		      \label{fig:iris-thm1}
        	    \end{subfigure}\hfill\begin{subfigure}{.48\linewidth}
        		      \includegraphics[width=\figwidth]{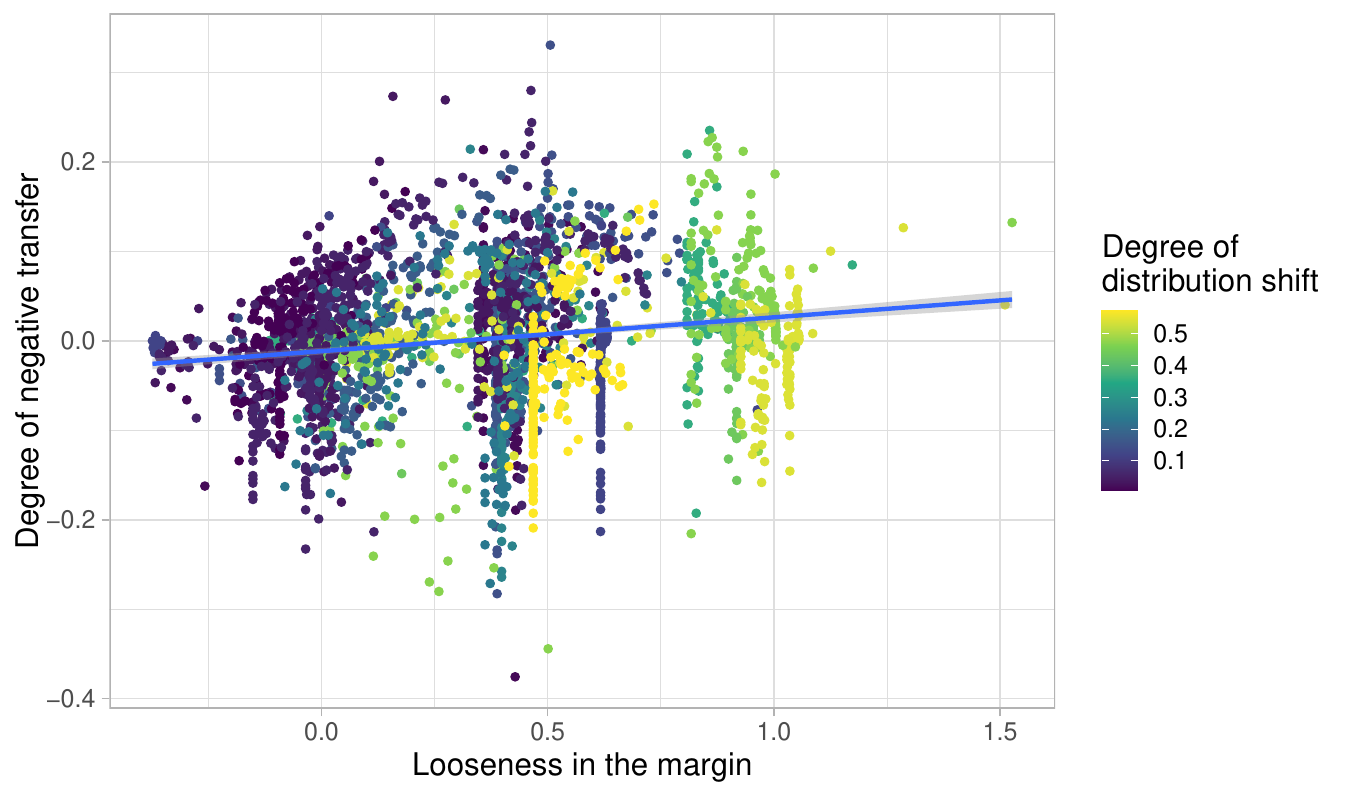}
        		      \caption{Looseness in the epistemic error margin is computed as $\tv{\preddist_{\mathrm{small} ~ N}}{\bary{\hypersourcedist}} + \distshift + \alpha - \epistemicerr_{\mathrm{small} ~ N}$.
                        Degree of negative transfer is computed as $\epistemicerr_{\mathrm{large} ~ N} - \epistemicerr_{\mathrm{small} ~ N}$.
                        Color indicates the value of $\distshift$.
                        The line and shaded region show the best-fitting linear trend and 95\% confidence region of that trend. \\ ~ \\ ~ \\
                    }
        		      \label{fig:iris-negtransfer}
        	    \end{subfigure}
        	    \caption{Results in a setting constructed using the Iris data set \citep{iris_53} to (a) illustrate \Cref{thm:imperfect-distshift} and (b) demonstrate the relationship between looseness in the epistemic error margin and negative transfer.
                    In both panels, $\alpha = \frac{1}{2}$.
                }
        	    \label{fig:iris}
            \end{figure}
            
            On each of these runs, the sample space of the target task distribution is a set of three tasks: $Q^{\mathrm{uncorrupted}}$, $Q^{\mathrm{corruption1}}$ and $Q^{\mathrm{corruption2}}$ assign probability 1 to the original label and each of the other two labels, respectively.
            The learner thus encountered $Q^{\mathrm{uncorrupted}}$ with probability $1 - \frac{2}{3} \targetflip$ and each of $Q^{\mathrm{corruption1}}$ and $Q^{\mathrm{corruption2}}$ with probability $\frac{1}{3} \targetflip$.

            On each of these runs, we considered 147 predictors: one for each set of measurements in the original Iris data set.
            Values $\hat{\delta}$ ($y$-axis) are the proportion of these 147 target tasks for which the theoretical epistemic error margin was exceeded.

        \paragraph{Negative transfer}
            While \Cref{fig:iris-thm1} shows that the epistemic error bound provided by \Cref{thm:imperfect-distshift} is valid, this figure also shows that the bound is in some cases considerably loose: The empirical probability of exceeding the epistemic error margin $\hat{\delta}$ is often much smaller than the theoretical probability $\delta^{\star}$.
            Recall from \Cref{sec:negtransfer,ap:btl-tv} that the degree of looseness in the epistemic error margin is conceptually related to negative transfer, a situation where learning from source data increases epistemic error.
            
            \Cref{fig:iris-negtransfer} shows the empirical relationship between the degree of looseness in the epistemic error margin ($x$-axis) and of negative transfer ($y$-axis).
            To assess the degree of negative transfer, we compare two learners: A learner trained on data from one task ($N = 147$) forms predictor $\preddist_{\mathrm{small} ~ N}$ and commits epistemic error $\epistemicerr_{\mathrm{small} ~ N}$.
            A learner trained on data from 10 tasks ($N = 1,470$) forms predictor $\preddist_{\mathrm{large} ~ N}$ and commits epistemic error $\epistemicerr_{\mathrm{large} ~ N}$.
            We computed the degree of negative transfer as $\epistemicerr_{\mathrm{large} ~ N} - \epistemicerr_{\mathrm{small} ~ N}$, i.e., as the extra epistemic error incurred by the learner who has learned more from the source data.

            \Cref{fig:iris-negtransfer} shows that the degree of looseness in the margin of $\epistemicerr_{\mathrm{small} ~ N}$ is positively correlated with the degree of negative transfer.
            The size of the epistemic error margin is affected by factors such as distribution shift, which contributes to the degree of negative transfer \citep{ben-david_analysis_2006,ben-david_theory_2010,zhang_survey_2023}.
            To test whether the relationship between looseness in the margin and the degree of negative transfer held when controlling for the degree of distribution shift, we fit a linear model of the degree of negative transfer as a function of the looseness in the epistemic error margin (with estimated effect $\widehat{\beta
            }_{l}$) and degree of distribution shift (with effect $\widehat{\beta
            }_{\mathcal{D}}$): The estimated effect of looseness remains positive when distribution shift is controlled for ($\widehat{\beta
            }_{l} = .06$ ($SE = .004$) and $\widehat{\beta
            }_{\mathcal{D}} = -.07$ ($SE = .008$)).
            In other words, learning from data may benefit the learner even in the presence of distribution shift (see schematic example in \Cref{fig:pos}), and vice versa.

\bibliographystyle{apalike}
\bibliography{bibliography}

\end{document}